\def\eqref#1{equation~\ref{#1}}
\def\Eqref#1{Equation~\ref{#1}}
\def\1{\bm{1}}
\DeclareMathAlphabet{\mathsfit}{\encodingdefault}{\sfdefault}{m}{sl}
\SetMathAlphabet{\mathsfit}{bold}{\encodingdefault}{\sfdefault}{bx}{n}
\newcommand{\RR}{\mathbb R}
\newcommand{\mlp}{\mathrm{MLP}}
\title{Graph Metanetworks for Processing Diverse Neural Architectures}
\author{Derek Lim \thanks{Work done while interning at NVIDIA.} \\
MIT CSAIL\\
\texttt{dereklim@mit.edu} \\
\And
Haggai Maron \\
Technion / NVIDIA\\
\texttt{hmaron@nvidia.com} \\
\And
Marc T. Law\\
NVIDIA \\
\texttt{marcl@nvidia.com} \\
\And
Jonathan Lorraine\\
NVIDIA\\
\texttt{jlorraine@nvidia.com} \\
\And
James Lucas\\
NVIDIA\\
\texttt{jalucas@nvidia.com}\\
}
\newif\ifcomments
\newtheorem{proposition}{Proposition}
\newtheorem{lemma}{Lemma}
\theoremstyle{definition}
\definecolor{mydarkgreen}{RGB}{0,100,0}
\newcommand{\eg}{e.g.,~}
\newcommand{\MNN}{Metanet}
\newcommand{\MNNs}{Metanets}
\newcommand{\mNN}{metanet}
\newcommand{\mNNs}{metanets}
\newcommand{\GMN}{GMN}
\newcommand{\GMNs}{GMNs}
\newcommand{\vertexSet}{V}
\newcommand{\edgeSet}{E}
\newcommand{\edge}{{\boldsymbol{e}}}
\newcommand{\nnParam}{{\boldsymbol{\theta}}}
\newcommand{\nnParamSpace}{{\Theta}}
\newcommand{\nnInput}{{\boldsymbol{x}}}
\newcommand{\inputspace}{\mathcal{X}}
\newcommand{\outputspace}{\mathcal{X}'}
\newcommand{\din}{{d_{\mathrm{in}}}}
\newcommand{\dout}{{d_{\mathrm{out}}}}
\newcommand{\inputSize}{\din}
\newcommand{\nodeDim}{{d_x}}
\newcommand{\edgeDim}{{d_e}}
\newcommand{\globalDim}{{d_u}}
\newcommand{\nnFunc}{f}
\newcommand{\graphNode}{{\boldsymbol{v}}}
\newcommand{\perm}{\phi}
\newcommand{\Bigperm}{\Phi}
\newcommand{\globalFeat}{{\boldsymbol{u}}}
\newcommand{\MLP}{\mathrm{MLP}}
\newcommand{\edgeTuple}[2]{(#1, #2)}
\newcommand{\nnAct}{\boldsymbol{h}}
\newcommand{\Vin}{\vertexSet_{\mathrm{in}}}
\newcommand{\Vout}{\vertexSet_{\mathrm{out}}}
\newcommand{\Vbias}{\vertexSet_{\mathrm{bias}}}
\newcommand{\Vhidden}{\vertexSet_{\mathrm{hidden}}}
\newcommand{\mathGNN}{\mathrm{GNN}}
\newcommand{\shareSet}{\mathcal{S}}
\newcommand{\ind}{\mathbbm{1}}
\newcommand{\GL}{\mathrm{GL}}
\newcommand{\defeq}{:=}
\newcommand{\rebut}[1]{\textcolor{purple}{#1}}
\renewcommand{\rebut}{}
\begin{document}

\maketitle

\vspace{-5pt}
\begin{abstract}
Neural networks efficiently encode learned information within their parameters. Consequently,
many tasks can be unified by treating neural networks themselves as input data.
When doing so, recent studies demonstrated the importance of accounting for the symmetries and geometry of parameter spaces. However, those works developed architectures tailored to specific networks such as MLPs and CNNs without normalization layers, and generalizing such architectures to other types of networks can be challenging.
In this work, we overcome these challenges by building new metanetworks --- neural networks that take weights from other neural networks as input. Put simply, we carefully build graphs representing the input neural networks and process the graphs using graph neural networks. Our approach, Graph Metanetworks (GMNs), generalizes to neural architectures where competing methods struggle, such as multi-head attention layers, normalization layers, convolutional layers, ResNet blocks, and group-equivariant linear layers. We prove that GMNs are expressive and equivariant to parameter permutation symmetries that leave the input neural network functions unchanged. We validate the effectiveness of our method on several metanetwork tasks over diverse neural network architectures.
\end{abstract}

\section{Introduction}
Neural networks are well-established for predicting, generating, and transforming data. A newer paradigm is to treat the parameters of neural networks themselves as data. This insight inspired researchers to suggest neural architectures that can predict properties of trained neural networks~\citep{eilertsen2020classifying}, generate new networks~\citep{erkocc2023hyperdiffusion}, optimize networks~\citep{metz2022velo}, or otherwise transform them~\citep{navon2023equivariant,zhou2023permutation}. We refer to these neural networks that process other neural networks as \emph{metanetworks}, or \emph{\mNNs} for short.

\MNNs{} enable new applications, but designing them is nontrivial. A common approach is to flatten the network parameters into a vector representation, neglecting the input network structure. More generally, a prominent challenge in \mNN{} design is that the space of neural network parameters exhibits symmetries. For example, permuting the neurons in the hidden layers of a Multilayer Perceptron (MLP) leaves the network output unchanged~\citep{hecht1990algebraic}. Ignoring these symmetries greatly degrades the \mNN{} performance~\citep{peebles2022learning, navon2023equivariant}. Instead, equivariant \mNNs{} respect these symmetries, so that if the input network is permuted then the \mNN{} output is permuted in the same way.

Recently, several works have proposed equivariant \mNNs{}  that have shown significantly improved performance \citep{navon2023equivariant, zhou2023permutation,zhou2023neural}. However, these networks typically require highly specialized, hand-designed layers that can be difficult to devise.
A careful analysis of its symmetries is necessary for any input architecture, followed by the design of corresponding equivariant \mNN{} layers. 
Generalizing this procedure to more complicated network architectures is time-consuming and nontrivial, so existing methods can only process simple input networks with linear and convolutional layers, and cannot process standard modules such as normalization layers or residual connections --- let alone more complicated modules such as attention blocks.
Moreover, these architectures cannot directly process input neural networks with varying architectures, such as those with different numbers of layers or hidden units.

This work offers a simple and elegant solution to \mNN{} design that respects neural network parameter symmetries. As in the concurrent work of \cite{zhang2023neural}, our technique's crux is representing an input neural network as a graph (see Figure~\ref{fig:method_diagram}). We show how to efficiently transform a neural network into a graph such that standard techniques for learning on graphs -- \eg Message Passing Neural Networks \cite{gilmer2017neural,battaglia2018relational} or Graph Transformers~\citep{rampavsek2022recipe}  -- will be equivariant to the parameter symmetries. One of our key contributions is in developing a compact \emph{parameter graph} representation, which in contrast to established computation graphs allows us to handle parameter-sharing layers like convolutions and attention layers without scaling with the activation count. While past work is typically restricted to processing MLPs and simple Convolutional Neural Networks (CNNs) \citep{NIPS1989_53c3bce6}, we validate experimentally that our graph \mNNs{} (\GMNs) generalize to more complicated networks such as Transformers~\citep{vaswani2017attention}, residual networks~\citep{he2016deep}, normalization layers~\citep{ioffe2015batch, ba2016layer, wu2018group}, general group-equivariant architectures \citep{ravanbakhsh2017equivariance} like Deep Sets~\citep{zaheer2017deep}, and more.

We prove theoretically that our \mNNs{} are equivariant to permutation symmetries in the input network, which we formulate via neural graph automorphisms (Section~\ref{sec:theory_equivariance}). This generalizes the hidden neuron permutations in MLPs and channel permutations of CNNs covered in prior work to arbitrary feedforward neural architectures. We further prove that our \mNNs{} operating on computation graphs are at least as expressive as prior methods --- meaning they can approximate them to arbitrary accuracy --- and can express the forward pass of any input feedforward neural network, generalizing a result of \cite{navon2023equivariant} (Section~\ref{sec:theory_expressive}). 

Empirical evaluations show that our approach solves a variety of \mNN{} tasks with diverse neural architectures. As part of this effort, we trained new datasets of diverse image classifiers, including 2D CNNs, 1D CNNs, DeepSets, ResNets, and Vision Transformers. Our method is easier to implement than past equivariant \mNNs{} while being at least as expressive, and it is applicable to more general input architectures. Crucially, our \GMNs{} achieve strong quantitative performance across all tasks we explored.

\section{Graph Automorphism-based \MNNs{}}

\begin{figure}
    \centering
    \includegraphics[width=0.95\linewidth]{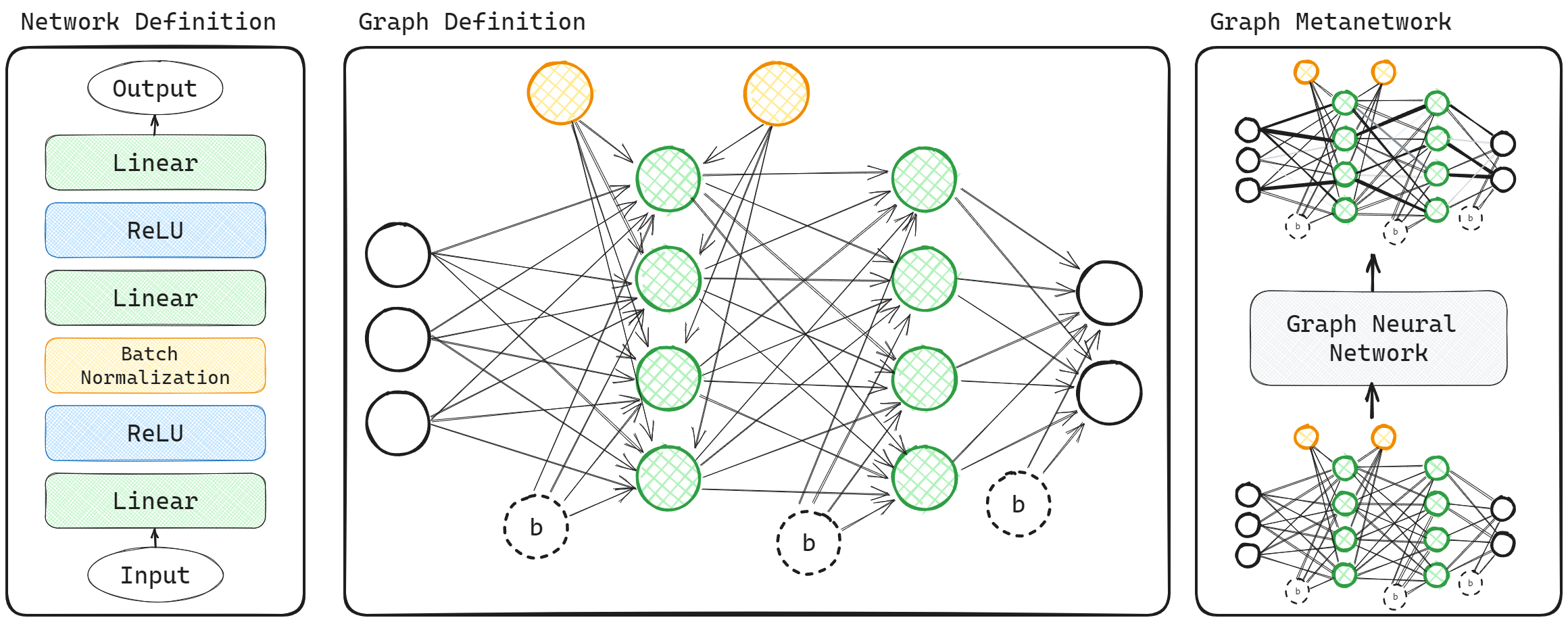}
    \vspace{-10pt}
    \caption{\textbf{Overview of Graph Metanetworks (\GMNs{})} Our method converts neural network architectures into a parameter graph where edges correspond to network parameters. The bias ($b$) and batch-normalization parameters are incorporated via additional nodes with edges to the relevant layer's neurons. The graph is processed by a graph neural network operating on edge attributes. Fixed-length (invariant) predictions can be extracted by pooling the output graph features.}
    \vspace{-10pt}
    \label{fig:method_diagram}
\end{figure}

We first explain how neural networks can be encoded as Directed Acyclic Graphs (DAGs). There are many choices in representing neural networks as DAGs, perhaps the most common being a computation graph (see Appendix~\ref{appendix:equivariance}). This work introduces a more compact representation, referred to as \emph{parameter graphs}.

We then introduce one of the paper's main concepts --- \emph{Neural DAG Automorphisms}. This concept generalizes previously studied symmetry groups for MLPs and CNNs to arbitrary feedforward architectures represented as DAGs. To conclude this section, we describe our GNN-based \mNN{} that operates over these graphs and is equivariant to Neural DAG Automorphisms.
A glossary of our notation is provided in Appendix Table~\ref{tab:TableOfNotation}.

\textbf{Motivation.}
Certain permutations of parameters in neural networks do not change the function they parameterize. 
For example, consider a simple MLP defined such that 
$\nnFunc_\nnParam(\nnInput) \defeq \boldsymbol{W}_2 \sigma(\boldsymbol{W}_1 \nnInput)$ with one hidden layer, where $\nnParam \defeq (\boldsymbol{W}_2, \boldsymbol{W}_1)$ are the parameters of the network, and $\sigma$ is a nonlinear element-wise activation function. 
For any permutation matrix $\boldsymbol{P}$, if we define $\tilde \nnParam \defeq (\boldsymbol{W}_2 \boldsymbol{P}^\top, \boldsymbol{P} \boldsymbol{W}_1)$, then for all inputs $\nnInput$, we have $\nnFunc_\nnParam(\nnInput) = \nnFunc_{\tilde \nnParam}(\nnInput)$. 
This $\boldsymbol{P}$ corresponds to a permutation of the order of the hidden neurons, which is well-known not to affect the network function \citep{hecht1990algebraic}. Likewise, permuting the hidden channels of a CNN does not affect the network function~\citep{navon2023equivariant, entezari2022the, ainsworth2023git}.

While these permutation symmetries for MLPs and simple CNNs are easy to determine by manual inspection, it is more difficult to determine the symmetries of general architectures. For example, simple residual connections introduce additional neuron dependencies across layers. 
Instead of manual inspection, we show that graph automorphisms (i.e. graph isomorphisms from a graph to itself) on DAGs representing feedforward networks correspond to permutation parameter symmetries.
From this observation, it can be shown that GNNs acting on these DAGs are equivariant to their permutation symmetries.

\textbf{Overview of our approach.} See Figure~\ref{fig:method_diagram} for an illustration.
Given a general input feedforward neural network, we first encode it as a graph in which each parameter is associated with an edge and then process this graph with a GNN. The GNN outputs a single fixed-length vector or predictions for each node or edge depending on the learning task. 
For instance, one graph-level task is to predict the scalar accuracy of an input neural network on some task. An edge-level task is to predict new weights for an input neural network to change its functionality somehow.

We now discuss the graph construction, the symmetries of these graphs, and the GNN we use.

\subsection{Graph construction for general feedforward architectures}
\label{sec:graph_constructions}

\begin{wrapfigure}{r}{0.5\textwidth}
    \vspace{-2em}
  \begin{center}
    \includegraphics[width=0.48\textwidth]{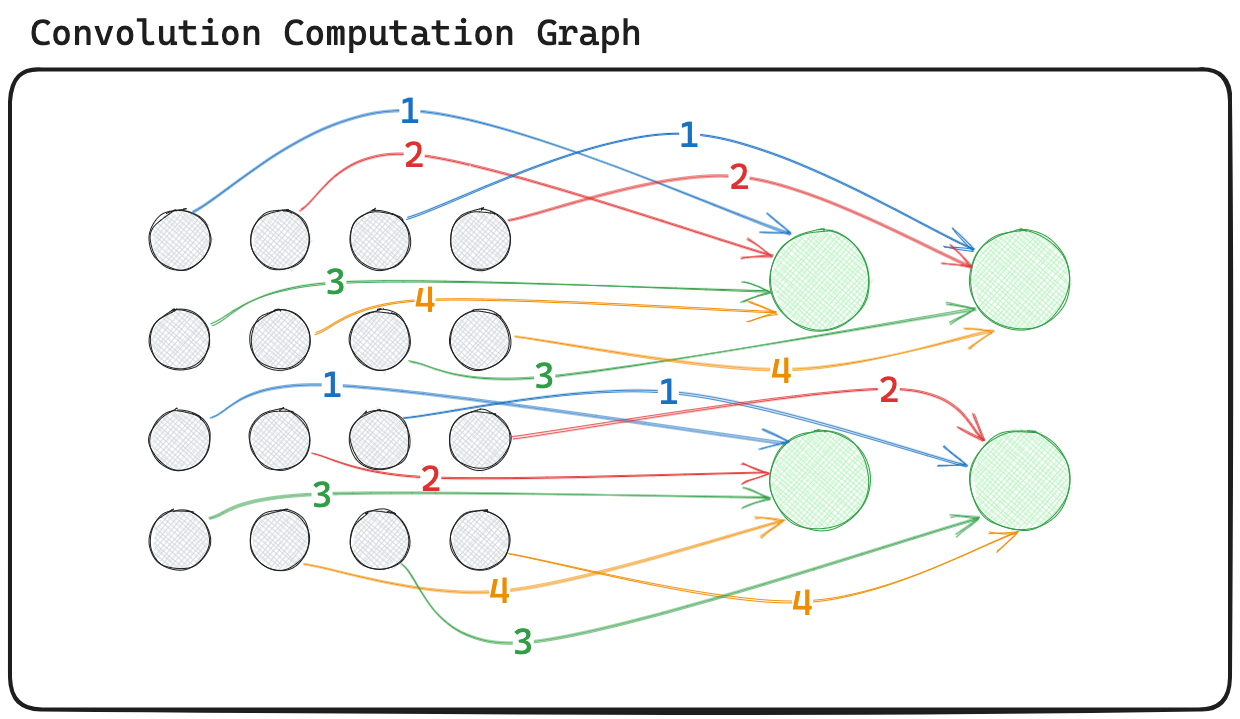}
  \end{center}
    \vspace{-15pt}
  \caption{An example computation graph for a network with a single convolutional layer. The layer has a $\num{2}\times\num{2}$ filter kernel, a single input and output channel, and applies the filter with a stride of 2. Even in this small case of a $\num{4} \times \num{4}$ input image, the graph has $\num{16}$ edges for only $\num{4}$ parameters.}
  \label{fig:conv_compute_graph} 
    \vspace{-10pt}
\end{wrapfigure}

\subsubsection{Computation graphs} Every feedforward neural network defines a computation graph as a DAG~\citep{zhang2023neural}, where nodes are neurons and edges hold neural network parameter weight values (see Fig.~\ref{fig:method_diagram} and Fig.~\ref{fig:conv_compute_graph}). Thus, this gives a method to construct a weighted graph. 
However, the computation graph approach has some downsides. For one, it may be expensive due to weight-sharing: for instance, a $1$-input-and-output-channel 2D-convolution layer with a kernel size of $2$ has $4$ parameters, but the $4$ parameters are used in the computation of many activations (e.g. $1024$ activations for a $32 \times 32$ input grid).
Further, we may want to add input node and edge features -- such as layer number -- to help performance and expressive power\footnote{In Section~\ref{sec:theory_expressive}, our proofs rely on these features to show graph \mNNs{} can express existing \mNNs{}.}. Figure~\ref{fig:conv_compute_graph} illustrates an example of a (small) computation graph for convolutions (for visual clarity, we exclude bias terms).
More details, including the exact formal correspondence between feedforward neural network functions and computation graphs, are given in Appendix~\ref{appendix:computation_graph}.

\begin{figure}
    \centering
    \includegraphics[width=\linewidth]{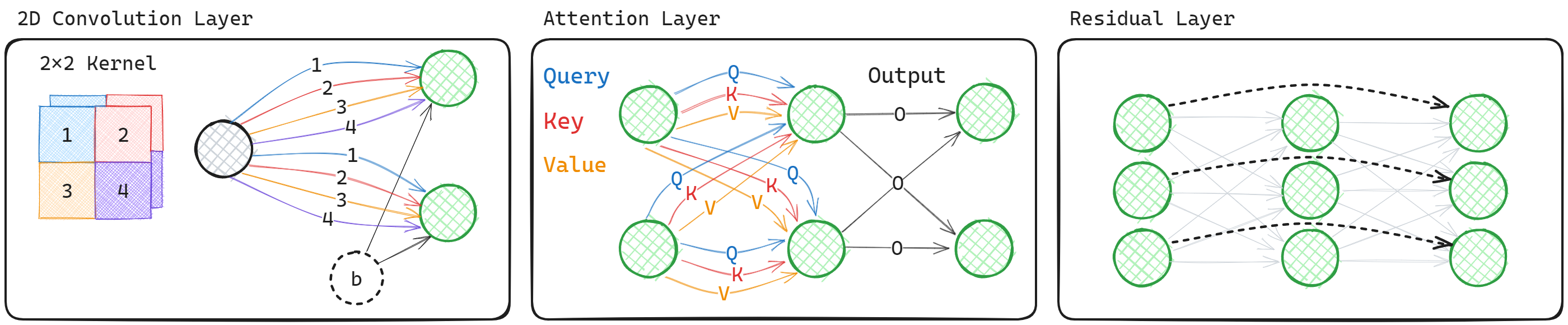}
    \vspace{-20pt}
    \caption{Parameter subgraph constructions for assorted layers that we implemented in our empirical evaluation. Their descriptions are given in Section~\ref{sec:param_graph_construction}. Further details are discussed in Appendix~\ref{appendix:graph_construction_details}.}
    \label{fig:layer_subgraphs}
    \vspace{-10pt}
\end{figure}

\subsubsection{Parameter graphs}\label{sec:param_graph_construction} To deal with the challenges of computation graphs, we propose alternate neural network graph constructions --- some examples of which are shown in Figure~\ref{fig:layer_subgraphs} --- that are (a) efficient and (b) allow expressive \mNNs{}. We call these graphs \textit{parameter graphs} because we design the graphs so that each parameter is associated with a single edge of the graph (whereas a parameter may be associated to many edges in a computation graph). 

We design modular subgraphs that can be created for each layer and then stitched together. Our goal is to design parameter graphs with at most one edge for each parameter in the neural network. Additionally, they should capture the correct set of parameter permutation symmetries. Full details are in Appendix~\ref{appendix:graph_construction_details}, but we discuss the major points behind the design of a selection of parameter graphs here.

\textbf{Linear layers.} Figure~\ref{fig:method_diagram} depicts three linear layers in sequence. Each linear layer's parameter subgraph coincides with its computation graph, but even so, there are important design choices to be made. Bias parameters could be encoded as node features as done by \citet{zhang2023neural} or as self-loop edges on each neuron. Instead, we include a bias node for each layer and encode the bias parameters as edges from that node to the corresponding neurons in the layer.

The bias node approach is preferable because the self-loop or node feature approaches to encoding biases can hinder the expressive power of the \mNN{}. In particular, the results of \citet{navon2023equivariant} and \citet{zhou2023permutation} show that the permutation equivariant linear \mNN{} maps for MLP inputs include message-passing-like operations where the representation of a bias parameter is updated via the representations of other bias parameters in the same layer. Using a message passing GNN on a graph with bias nodes allows us to naturally express these operations in a single graph \mNN{} layer, as explained in Appendix~\ref{appendix:expressive_bias_nodes}.

\textbf{Convolution layers.} Convolutions and other group equivariant linear layers leverage parameter sharing, where the same parameter is used to compute many activations~\citep{ravanbakhsh2017equivariance}. Therefore, representing convolutions as a computation graph introduces scaling issues and binds the network graph to a choice of input size. To avoid this, we develop a succinct and expressive parameter graph representation of convolutions. This is depicted in the left of Figure~\ref{fig:layer_subgraphs} for a convolution layer with a $2 \times 2$ filter, one input channel, and two output channels. Note that we have two output channels here, unlike the computation graph in Figure~\ref{fig:conv_compute_graph}, where there is only one.

Our subgraph construction allocates a node for each input and output channel. We then have parallel edges between each input and output node for each spatial location in the filter kernel --- making this a multigraph. Bias nodes are added as in the linear layers. This subgraph contains exactly one edge for each parameter in the convolution layer while capturing the parameter permutation symmetries as graph automorphisms. The spatial position of each weight within the kernel is included by a positional encoding in the corresponding edge feature.

In Section~\ref{sec:pred_acc}, we use our graph \mNNs{} to process 1D and 2D convolutional networks, as well as DeepSets networks that consist of permutation equivariant linear layers~\citep{zaheer2017deep}.

\textbf{Multi-head attention layers.} Attention layers~\citep{vaswani2017attention} also exhibit parameter sharing across sequence length. As with convolutions, we design an efficient subgraph representation where each parameter appears as a single edge. There is one node for each feature dimension of the input, vectors used in attention computation, and output. There are two sets of edges: a set that corresponds to the query, key, and value maps, and a second set corresponding to the final output mapping.

In the middle of Figure~\ref{fig:layer_subgraphs} we show a single-headed self attention layer, with bias nodes excluded for visual clarity. Generalizing this to multi-head attention simply requires adding additional node features to the middle layer that indicate which head each node belongs to.

\textbf{Residual layers.} A residual connection does not introduce any additional parameters, but it does affect the permutation symmetries in the parameter space of the network. Therefore, it is crucial to represent residual connections as additional parameter-free edges within the parameter graph. The top-right of Figure~\ref{fig:layer_subgraphs} shows a residual connection bypassing a linear layer. The edges are drawn as dashed lines to emphasize that there are no associated parameters. As is natural in the computation graph, we fix the weight of the residual edge to be 1.

\subsection{Neural DAG automorphisms}
\label{sec:theory_equivariance}
The prior section describes how to represent (feedforward) neural networks as DAGs. A natural question from an equivariant learning perspective is: what are the symmetries of this DAG representation?
Specifically, we consider graph automorphisms, which are structure-preserving transformations of a graph unto itself. A neural DAG automorphism of a DAG $(\vertexSet, \edgeSet)$ associated with a neural network is a permutation of nodes $\perm: \vertexSet \to \vertexSet$ that preserves adjacency, preserves types of nodes (e.g. $\perm$ cannot map hidden neurons to input neurons), and preserves weight-sharing constraints (i.e. tied weights must still be tied after permuting the endpoints with the automorphism); see Appendix~\ref{appendix:automorphism} for more details. Every automorphism $\perm$ also induces a permutation of edges $\perm^\edge: \edgeSet \to \edgeSet$, where edge $\edgeTuple{i}{j}$ is mapped to $\perm^\edge(\edgeTuple{i}{j}) = (\perm(i), \perm(j))$. %

\begin{figure}
    \centering    \includegraphics[width=\linewidth]{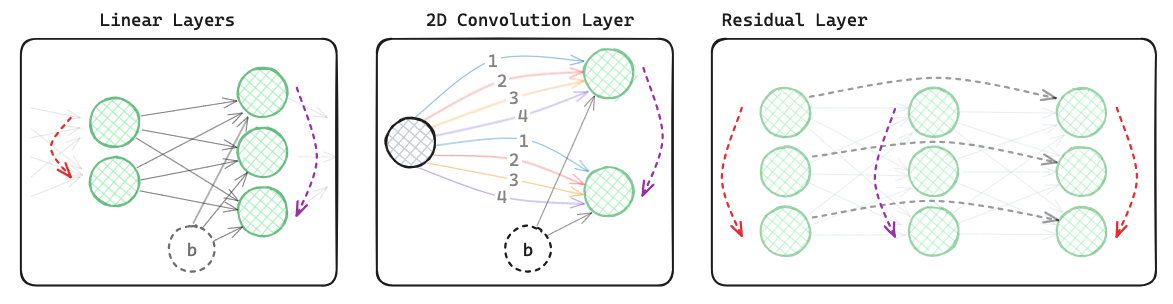}
    \caption{Examples of neural DAG automorphisms for linear layers, convolutional layers, and residual layers. Possible node permutations are illustrated using red and purple dashed arrows, the same color represents an identical transformation.}
    \label{fig:automorphisms}
\end{figure}
Intuitively, a neural DAG automorphism represents a permutation of the neural network parameters via the induced edge permutation, $\perm^{\edge}$. We write $\Bigperm(\nnParam)$ to represent this permutation on the parameters themselves, meaning $\Bigperm(\nnParam)_{(\phi(i), \phi(j))} = \nnParam_{(i, j)}$. Hidden node permutations in MLPs and hidden channel permutations in CNNs are special cases of neural DAG automorphisms, which we explain in Appendix~\ref{appendix:mlp_cnn_symmetry}.  To formalize these notions, we show that every neural DAG automorphism $\perm$ of a computation graph is a permutation parameter symmetry, in the sense that the induced parameter permutation $\Bigperm$ does not change the neural network function. Figure  \ref{fig:automorphisms} illustrates several neural DAG automorphisms.

\begin{proposition}
\label{proposition:dag_automorphism}
    For any neural DAG automorphism $\perm$ of a computation graph, the neural network function is left unchanged: $\forall \nnInput \in \inputspace, \nnFunc_{\nnParam}(\nnInput) = \nnFunc_{\Bigperm(\nnParam)}(\nnInput)$.
\end{proposition}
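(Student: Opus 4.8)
The plan is to induct on a topological ordering of the computation DAG, showing that applying $\Bigperm$ merely relabels which node carries which activation, so that the values read off at the (fixed) output nodes are identical. Concretely, for a parameter setting $\nnParam$ and input $\nnInput$, write $h_v^{\nnParam}(\nnInput)$ for the activation computed at node $v \in \vertexSet$: input nodes are assigned the corresponding coordinates of $\nnInput$, and every other node $v$ computes $h_v^{\nnParam}(\nnInput) = \sigma_v\big(\sum_{i : (i,v) \in \edgeSet} \nnParam_{(i,v)} \, h_i^{\nnParam}(\nnInput)\big)$, where $\sigma_v$ is the elementwise nonlinearity associated with $v$'s node type. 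The key invariant I would establish is that for every node $v$, $h_{\perm(v)}^{\Bigperm(\nnParam)}(\nnInput) = h_v^{\nnParam}(\nnInput)$ --- that is, the permuted network reproduces at $\perm(v)$ exactly the activation the original network produces at $v$.

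First I would treat the base case of input nodes. Since $\perm$ preserves node types, it fixes each input node (input nodes being individually distinguished, e.g.\ by position), so $h_{\perm(v)}^{\Bigperm(\nnParam)} = h_v^{\nnParam}$ holds because both equal the same input coordinate. For the inductive step, consider a non-input node $v$ and assume the invariant for all nodes earlier in the topological order. Because $\perm$ preserves adjacency, the in-neighbors of $\perm(v)$ are exactly $\{\perm(i) : (i,v) \in \edgeSet\}$; because $\perm$ preserves node types, the nonlinearity at $\perm(v)$ equals $\sigma_v$; and by the definition of the induced parameter permutation, $\Bigperm(\nnParam)_{(\perm(i), \perm(v))} = \nnParam_{(i,v)}$. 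Substituting these into the defining recursion for $h_{\perm(v)}^{\Bigperm(\nnParam)}$ and applying the inductive hypothesis $h_{\perm(i)}^{\Bigperm(\nnParam)} = h_i^{\nnParam}$ to each in-neighbor, the weighted sum feeding $\perm(v)$ matches termwise the one feeding $v$, establishing the invariant at $v$.

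Finally, since $\perm$ fixes the output nodes (again by type preservation), applying the invariant at each output node gives $h_{v}^{\Bigperm(\nnParam)}(\nnInput) = h_v^{\nnParam}(\nnInput)$ for all output $v$, and reading off the output coordinates yields $\nnFunc_{\Bigperm(\nnParam)}(\nnInput) = \nnFunc_{\nnParam}(\nnInput)$ for all $\nnInput \in \inputspace$. I expect the main obstacle to be the careful handling of the boundary rather than the induction itself: one must justify that a type-preserving automorphism fixes input and output nodes pointwise (otherwise permuting input neurons would permute input coordinates and genuinely change the function), and one must check that $\Bigperm$ is well defined when the graph has tied weights --- here the weight-sharing-preservation clause of the automorphism definition is exactly what guarantees that $\perm^\edge$ maps each group of tied edges to a single group of tied edges consistently, so that $\Bigperm(\nnParam)$ is a legitimate parameter vector of the same architecture.
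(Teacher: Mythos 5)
Your proof is correct and follows essentially the same route as the paper's: the same key invariant $\nnAct_{\perm(\graphNode)}^{\Bigperm(\nnParam)}(\nnInput) = \nnAct_{\graphNode}^{\nnParam}(\nnInput)$, proved by induction along the DAG (the paper inducts on layer number, you on a topological order, which is equivalent), using edge preservation to re-index the incoming sum, node-label preservation to fix input/output/bias nodes, and the definition of $\Bigperm$ to match weights termwise. Your closing remark on why weight-sharing preservation makes $\Bigperm(\nnParam)$ well defined is a sensible addition that the paper's proof leaves implicit.
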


A proof is given in Appendix~\ref{appendix:autuomorphisms_preserve}. Recall that our goal is to design \mNNs{} equivariant to parameter permutation symmetries. Proposition~\ref{proposition:dag_automorphism} shows to achieve this it is necessary to design \mNNs{} that are equivariant to neural DAG automorphisms.
Graph \mNNs{} achieve this exactly since GNNs are equivariant to permutation symmetries of graphs~\citep{maron2019invariant}.
\begin{proposition}
    Graph \mNNs{} are equivariant to parameter permutations induced by neural DAG automorphisms.
\end{proposition}
These results formally justify using graph \mNNs{} on computation graphs for equivariance to parameter permutation symmetries. Now, noting the parameters are stored as edge features in our computation and parameter graphs, we design graph neural networks that operate on these DAGs.

\subsection{Formulating \MNNs{} as GNNs}\label{sec:gnn}

After constructing the input graphs, we use a GNN as our \mNN{} to learn representations and perform downstream tasks. We desire GNNs that learn edge representations since the input neural network parameters are placed on the edges of the constructed graph. While countless GNN variants have been developed in the last several years~\citep{hamilton2020graph}, most learn node representations.

For simplicity, we mostly loosely follow the general framework of \citet{battaglia2018relational}, which defines general message-passing GNNs that update node, edge, and global features. For a graph, let $\graphNode_i \in \RR^\nodeDim$ be the feature of node $i$, $\edge_{\edgeTuple{i}{j}} \in \RR^\edgeDim$ a feature of the directed edge $\edgeTuple{i}{j}$, $\globalFeat \in \RR^\globalDim$ be a global feature associated to the entire graph, and let $\edgeSet$ be the set of edges in the graph. The directed edge $\edgeTuple{i}{j}$ represents an edge starting from $j$ and ending at $i$. We allow multigraphs, where there can be several edges (and hence several edge features) between a pair of nodes $(i,j)$; thus, we let $\edgeSet_{(i,j)}$ denote the set of edge features associated with $(i,j)$. Then, a general GNN layer updating these features can be written as:
\begin{align}
    \graphNode_i & \gets \MLP_2^\graphNode\left(\graphNode_i, \sum_{j, \edge_{\edgeTuple{i}{j}} \in \edgeSet_{(i,j)}} \MLP_1^\graphNode(\graphNode_i, \graphNode_j, \edge_{\edgeTuple{i}{j}}, \globalFeat), \globalFeat \right) \label{eq:node_features} \\
    \edge_{\edgeTuple{i}{j}} & \gets \MLP^\edge(\graphNode_i, \graphNode_j, \edge_{\edgeTuple{i}{j}}, \globalFeat) \label{eq:edge_features} \\
    \globalFeat & \gets \MLP^\globalFeat\left(\sum_i \graphNode_i, \sum_{\edge \in E} \edge, \globalFeat\right) \label{eq:global_features}
\end{align}
Intuitively, node features are updated by message passing along neighbors (\Eqref{eq:node_features}), the features of adjacent nodes update the features of the edges connecting them (\Eqref{eq:edge_features}), and the global feature $\globalFeat$ is updated with aggregations of all features (\Eqref{eq:global_features}). While our graphs are DAGs, we are free to use undirected edges by ensuring that $\edgeTuple{i}{j}~\in~\edgeSet$ implies $\edgeTuple{j}{i}~\in~\edgeSet$ with $\edge_{\edgeTuple{i}{j}} = \edge_{\edgeTuple{j}{i}}$. We often choose to allow message passing between layers in both directions.

For parameter-level \mNN{} tasks with per-parameter predictions, we let the prediction be the final feature $\edge_{\edgeTuple{i}{j}}$ of the parameter's corresponding edge. We pool edge features for network-level \mNN{} tasks where a fixed-length vector is the final prediction for each graph. We can pool node features but found pooling edge features sufficient. We do not use global features for empirical results, but we find them crucial for the expressive power results of Proposition~\ref{prop:express_metanets}.

\section{Expressive Power of Graph \MNNs{} (\GMNs{})}\label{sec:theory_expressive}

Ideally, one does not sacrifice expressive power when restricting a neural architecture to satisfy group equivariance constraints. We want our \mNN{} architecture to be powerful enough to learn useful functions of network parameters. To this end, we first show that \GMNs{} can express two existing equivariant \mNNs{} on MLP inputs. Consequently, \GMNs{} are at least as expressive as these approaches.
\begin{proposition}\label{prop:express_metanets}
    On MLP inputs (where parameter graphs and computation graphs coincide), graph \mNNs{} can express StatNN~\citep{unterthiner2020predicting} and NP-NFN~\citep{zhou2023permutation}. 
\end{proposition}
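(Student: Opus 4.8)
The two targets are of different types --- StatNN is permutation-invariant while NP-NFN is permutation-equivariant --- so I would handle them separately, in each case exhibiting an explicit choice of the update functions $\MLP_1^\graphNode, \MLP_2^\graphNode, \MLP^\edge, \MLP^\globalFeat$ appearing in \eqref{eq:node_features}, \eqref{eq:edge_features}, and \eqref{eq:global_features} (together with a final readout) that reproduces the target map on MLP parameter graphs. Throughout I would rely on the layer-index input features attached to nodes and edges, so that every aggregation below can be made layer-aware.

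\textbf{StatNN.} StatNN extracts a fixed vector of per-layer weight and bias statistics and then applies an MLP predictor. The key observation is that each such statistic is a permutation-invariant reduction over the edges of a single layer, and for an MLP the layer-$\ell$ weights are precisely the edges carrying the layer-$\ell$ index. First I would let $\MLP^\edge$ send each edge feature $\edge_{(i,j)}$ to a layer-indexed vector --- a one-hot in the layer coordinate, scaled in parallel by the weight, its square, and a constant $1$ --- so that the edge-sum $\sum_{\edge \in \edgeSet}\edge$ in \eqref{eq:global_features} accumulates per-layer counts, first moments, and second moments simultaneously. An MLP applied to the resulting global feature then recovers per-layer means and variances exactly, and StatNN's predictor MLP is absorbed into the readout. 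The main subtlety is StatNN's use of \emph{quantile} statistics, which are order statistics rather than moments; here I would invoke universal approximation of symmetric set functions --- per-layer sums of MLP-transformed edge features form exactly a DeepSets aggregator --- to match the quantiles to arbitrary accuracy rather than exactly, consistent with the ``express to arbitrary accuracy'' reading of the claim.

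\textbf{NP-NFN.} NP-NFN is a stack of permutation-equivariant linear layers interleaved with pointwise nonlinearities, so it suffices to show that (i) a pointwise nonlinearity is trivially realized by $\MLP^\edge$ acting on a single edge feature, and (ii) one equivariant linear layer is expressed by a bounded number of \GMN{} layers. For (ii) I would use the explicit basis of equivariant maps on MLP weight space \citep{zhou2023permutation}: each basis map sends a weight or bias entry to a linear combination of pooled-and-broadcast quantities --- the entry itself, its row sum, its column sum, the layer-wide sums of weights and of biases, and the analogous quantities from adjacent layers. The crux is matching each such operation to a message-passing primitive: the row sum $\sum_k W^{(\ell)}_{ik}$ is exactly the aggregation of incoming edges at node $i$ in \eqref{eq:node_features}, a column sum is the same aggregation run in the reverse direction, and broadcasting a node quantity back onto its incident edges is the edge update \eqref{eq:edge_features}. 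Layer-wide pools and bias couplings are routed through the bias node, which is adjacent to every neuron of its layer, so that a two-hop message (edge $\to$ neuron $\to$ bias node $\to$ neuron $\to$ edge) realizes them, and cross-layer terms cost one further hop. Because the \GMN{} update functions are MLPs, they can reproduce any fixed linear combination of these aggregates, and stacking a constant number of \GMN{} layers per NP-NFN layer yields the whole map.

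\textbf{Main obstacle.} I expect the delicate step to be the completeness check in (ii): one must enumerate the full basis of NP-NFN's equivariant linear maps and verify that \emph{every} pooling/broadcasting pattern it contains --- in particular the layer-wide terms, the cross-layer terms, and all bias-to-weight and bias-to-bias couplings --- is realizable by the available node, edge, and bias-node aggregations within a bounded number of rounds. Getting the bias-node bookkeeping right (so that, \eg a bias is updated from the other biases of its layer in a single round, as anticipated in the motivation for introducing bias nodes) is where the argument must be most careful.
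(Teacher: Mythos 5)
Your StatNN argument is essentially the paper's own: the paper likewise pushes each edge through an MLP keyed to layer and weight/bias indicators, sums into the global feature, and uses the DeepSets decomposition $s(w)=\rho\left(\sum_i \phi(w_i)\right)$ of the continuous permutation-invariant statistics (quantiles included, so no separate moment/quantile split is needed), with exact-representation-versus-approximation handled by a composition lemma. That half is fine.

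The NP-NFN half has a genuine gap. The NP-NFN linear layer is not built only from the entry, its row/column sums, same-layer pools, and adjacent-layer quantities: it contains the terms $\sum_{s=1}^{L} a_1^{l,s}\, W^{(s)}_{\star,\star}$ and $\sum_{s=1}^{L} a_7^{l,s}\, b^{(s)}_{\star}$, which couple the layer-$l$ update to the layer-wide sums of \emph{every} layer $s$, with coefficients depending on the pair $(l,s)$. Your routing scheme (edge $\to$ neuron $\to$ bias node $\to$ neuron $\to$ edge, plus ``one further hop'' for cross-layer terms) only reaches the same layer and the adjacent layers: the bias node of layer $s$ is adjacent only to the neurons of layer $s$, so its pooled value needs $\Theta(|l-s|)$ message-passing rounds to reach an edge in layer $l$. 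Hence a constant number of \GMN{} layers per NP-NFN layer cannot reproduce even a single NP-NFN linear layer; you would need depth $\Omega(L)$ together with per-node bookkeeping of all $2L$ layer sums, contradicting your claimed bound. This is precisely why the paper does not route the global terms through bias nodes but through the global graph feature $\globalFeat$: one global update pools all edges (using layer/type indicator features) into the vector $\left(W^{(1)}_{\star,\star},\ldots,W^{(L)}_{\star,\star},\, b^{(1)}_{\star},\ldots,b^{(L)}_{\star}\right)$, which is then available to every edge update in the same \GMN{} layer --- the paper even remarks in Section 2.3 that the global feature is crucial for exactly this proposition, while bias nodes are only needed for the same-layer bias-to-bias interactions. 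Your construction can be repaired either by adopting the global feature (the paper's route) or by accepting $O(L)$ rounds with the attendant bookkeeping; as written, the basis enumeration omits the all-pairs layer coupling and the hop-count analysis fails on it.
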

The proof is given in Appendix~\ref{appendix:express_prior_metanets}. StatNN is based on permutation-invariant statistics of each weight or bias, which graph metanetworks can easily compute. The linear layers of NP-NFN consist of local message-passing-like operations and global operations that the global feature can capture. 

Second, we show that \GMNs{} can simulate the forward pass of an input neural network represented by the DAG of its computation graph (Appendix~\ref{appendix:computation_graph}). This substantially generalizes a result of~\citet{navon2023equivariant}, who show that their DWSNets can simulate the forward pass of MLPs.
\begin{proposition}
    On computation graph inputs, graph \mNNs{} can express the forward pass of any input feedforward neural network as defined in Section~\ref{appendix:computation_graph}.
\end{proposition}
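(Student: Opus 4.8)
The plan is to show that a stack of the message-passing layers in Equations~(\ref{eq:node_features})--(\ref{eq:global_features}) can propagate and compute activations along the computation-graph DAG round by round, so that after enough rounds the node feature at each output neuron equals its forward-pass value. The guiding idea is to store the activation of each neuron inside its node feature $\graphNode_i$, to read the weight $w_{ij}$ off the edge feature $\edge_{(i,j)}$, and to have a single GNN layer implement one synchronous step of the recursion $h_i = \sigma_i\!\left(\sum_{j \in \parents(i)} w_{ij} h_j + b_i\right)$ that defines the forward pass.

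First I would fix the encoding. I initialize each input neuron's node feature with the corresponding coordinate of the network input $\nnInput$ (input node features are permitted in the GMN framework), set all other activations to a placeholder, and append to every node feature its type and its depth $d(i)$, defined as the length of the longest directed path from an input neuron to $i$; edge features are kept equal to the scalar weights, and biases are supplied as edges from a constant-one bias node. Since stacked GNN layers are not weight-tied, I can tailor the MLPs of the $t$-th layer to round $t$: the message network $\MLP_1^\graphNode$ forms the product $w_{ij} h_j$ from $(\graphNode_j, \edge_{(i,j)})$, the sum-aggregation yields $\sum_j w_{ij} h_j + b_i$, and the update network $\MLP_2^\graphNode$ applies the appropriate $\sigma_i$ (selected using the type feature) --- but only at nodes of depth exactly $t$, copying the stored activation unchanged everywhere else. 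The edge update $\MLP^\edge$ is taken to be the identity so that the weights persist across rounds.

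The argument is then an induction on $t$. The base case $t=0$ holds by initialization, since the depth-$0$ nodes are exactly the input neurons. For the inductive step, every predecessor of a depth-$t$ node has depth strictly less than $t$, so by hypothesis its activation was already computed at an earlier round and, by the freeze-and-copy behavior, preserved through round $t-1$; hence round $t$ aggregates the correct incoming values and writes the correct $h_i$, while leaving all other nodes untouched. After $L \ge \max_i d(i)$ rounds, every node --- in particular every output neuron --- carries its forward-pass value, which the node readout then returns.

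The main obstacle is that $\MLP_1^\graphNode$ must realize the bilinear map $(h_j, w_{ij}) \mapsto w_{ij} h_j$, which is not affine and so cannot be computed exactly by an MLP. I would resolve this with the universal approximation property on the compact set to which all relevant quantities are confined: bounded weights on a compact parameter domain, bounded inputs, and continuous activations keep every intermediate value in a fixed compact box where multiplication, $\sigma_i$, and the depth-keyed selector are all uniformly approximable. The construction is therefore exact only in the limit, matching the ``express to arbitrary accuracy'' reading of the statement, and I would finish with a routine error-propagation estimate: because each layer acts Lipschitz-continuously on that compact box, making the per-layer approximation error small enough forces the error accumulated over the $L$ rounds below any prescribed $\eps$.
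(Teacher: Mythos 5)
Your proposal is correct and follows essentially the same route as the paper's proof: induction on the depth (maximum path length from the inputs), with one GNN layer per depth level whose message MLP forms the product $w_{ij}h_j$, sum aggregation over incoming edges, and an update MLP applying the nonlinearity, together with the standard compact-domain universal-approximation argument (which the paper isolates in its ``Representation vs.\ Approximation'' preamble and Lemma~\ref{lem:layerwise_universal}) to handle the fact that the bilinear message map is only approximable by MLPs. Your explicit freeze-and-copy mechanism and depth features are a slightly more careful bookkeeping of a detail the paper glosses over (its stated update would recompute, and for input nodes must implicitly use the node-type label to preserve their features), but this does not change the substance of the argument.
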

The above result applies only to \GMNs{} operating on computation graphs. This makes it directly applicable to the parameter graphs for MLPs as they coincide with the computation graph. However, we expect that a similar result is possible for more general parameter graphs.
We leave formal proof of this to future work.

\section{Related Work}

\textbf{Metanetworks.} Recently, several works developed \mNNs{} equivariant to permutation parameter symmetries of simple networks. DWSNets \citep{navon2023equivariant} and NFN \citep{zhou2023permutation} derive the form of linear layers that are group-equivariant to permutation parameter symmetries of simple MLPs (and NFN also handles CNNs). NFTs \citep{zhou2023neural} use special parameter-permutation-equivariant attention layers and layer-number features to build Transformer-like \mNNs{}.

Other types of \mNNs{} have been developed, including those based on statistics such as weight mean and standard deviation~\citep{unterthiner2020predicting}, 1D convolutional models on the flattened weight vector~\citep{eilertsen2020classifying}, hierarchical LSTMs and per-parameter MLPs~\citep{metz2022velo}, and set functions applied to chunked parameter sets~\citep{andreis2023set}.

Special \mNNs{} have also been developed to process implicit neural representations (INRs) representing shapes, images, or other types of continuous data~\citep{dupont2022data, luigi2023deep, bauer2023spatial}. However, these approaches require special training procedures for the input neural networks and/or access to forward passes of the input network. In contrast, graph \mNNs{} can process datasets of independently trained neural networks of diverse architectures without modifying the inputs or requiring forward passes through the inputs, as we demonstrate in our experiments.

The recent work of \citet{zhang2023neural} also proposes graph neural networks as \mNNs{}. They use the computation graph construction of MLPs  -- not our general parameter graphs -- which allows us to scale efficiently for models with parameter sharing (like CNNs). Moreover, they adopt a different graph construction for bias nodes, where our approach is inspired by expressive power considerations from the form of the parameter-permutation-equivariant linear maps~\citep{navon2023equivariant, zhou2023permutation}. Finally, \citet{zhang2023neural} evaluate their method empirically but do not provide theoretical guarantees. In contrast, we develop theoretical guarantees for our method that extend beyond the results presented in prior work.

\textbf{NNs as Graphs.} Many works consider neural networks as graphs, including early works that aimed at developing unified frameworks for neural networks~\citep{bottou1990framework, gegout1995mathematical}. The graph perspective has been used in several application areas, including: neural architecture search~\citep{liu2019darts, xie2019exploring}, analyzing relationships between graph structure and performance~\citep{you2020graph}, and federated learning with differing neural architectures~\citep{litany2022federated}.
Several works process neural networks in some graph representation~\citep{zhang2019d, thost2021directed, knyazev2021parameter, litany2022federated}, but these works generally operate on a much coarser network representation (e.g. where a node can correspond to a whole convolutional layer), whereas in our approach we desire each parameter to correspond to an edge.

\textbf{GNNs and Edge Representations.} There has been much work on graph neural networks (GNNs) in recent years, leading to many variants that can be used in our framework~\citep{hamilton2020graph}. GNNs learning edge representations instead of just node or whole-graph representations are particularly relevant. These include message-passing-like architectures, architectures based on graph-permutation-equivariant linear layers, and Transformers for graph data~\citep{battaglia2018relational, maron2019invariant, kim2021transformers, kim2022pure, kim2022equivariant, diao2023relational, vignac2023digress, ma2023graph}.

\section{Experiments}\label{sec:experiments}

\subsection{Predicting Accuracy for Varying Architectures}\label{sec:pred_acc}

\begin{table}[!t]
    \centering
    \vspace{-0.05\textheight}
    \caption{Results for predicting the test accuracy of input neural networks trained on CIFAR-10. The top results use a training set of $\num{15000}$ uniformly selected input networks, the middle results use 10\% of this training set, and the bottom results only train on input networks of low hidden dimension (while testing on networks with strictly higher hidden dimension). Our method performs best in all setups, with increasing benefits in the low-data and OOD regimes.}
    \vspace{0.01\textheight}
    {\small
    \begin{tabular}{clccccc}
    \toprule
        & &  \multicolumn{2}{c}{Varying CNNs} && \multicolumn{2}{c}{Diverse Architectures} \\
        \cmidrule(lr){3-4}  \cmidrule(lr){6-7}
         &  \MNN{} & $R^2$ & $\tau$ && $R^2$ & $\tau$  \\
         \midrule
         \multirow{3}{*}{\rotatebox[origin=c]{90}{$50\%$}}   & DeepSets~\citep{zaheer2017deep} & .778$\pm.002$ & .697$\pm.002$ && .562$\pm.020$ & .559$\pm.011$  \\
          &DMC~\citep{eilertsen2020classifying} & .948$\pm.009$ & .876$\pm .003$ && .957$\pm.009$ & .883$\pm.007$ \\
          & \GMN{} (Ours) & \bf .978$\pm.002$ & \bf .915$\pm.006$ && \bf  .975$\pm .002$ & \bf .908$\pm .004$   \\
         \midrule         
         \multirow{3}{*}{\rotatebox[origin=c]{90}{$5\%$}}   & 
         DeepSets~\citep{zaheer2017deep} & .692$\pm.006$ & .648$\pm .002$  && .126$\pm.015$ & .290$\pm.010$  \\
         & DMC~\citep{eilertsen2020classifying} & .816$\pm.038$ & .762$\pm.014$ && .810$\pm.046$ & .758$\pm.013$  \\
         & \GMN{} (Ours) & \bf .876$\pm.010$ & \bf .797$\pm.005$ && \bf .918$\pm.002$ & \bf .828$\pm.005$ \\
         \midrule
        \multirow{3}{*}{\rotatebox[origin=c]{90}{OOD}}   & 
         DeepSets~\citep{zaheer2017deep} & .741$\pm.015$ & .683$\pm.005$ && .128$\pm.071$ & .380$\pm.014$ \\
         & DMC~\citep{eilertsen2020classifying} & .387$\pm.229$ & .760$\pm.024$ && $-.134\pm.147$ & .566$\pm.055$ \\
         & \GMN{} (Ours) & \bf .891$\pm.037$ & \bf .870$\pm.010$ && \bf .768$\pm.063$ & \bf .780$\pm.030$ \\
         \bottomrule
    \end{tabular}
    }
    \label{tab:pred_acc}
\end{table}

\textbf{Task.} As in prior works~\citep{unterthiner2020predicting, zhou2023permutation}, we train \mNNs{} to predict the test accuracy of input neural networks. We consider image classification neural networks trained on the CIFAR-10 dataset~\citep{krizhevsky2009learning}, and the \mNN{} task is to take the parameters of an input network and predict the network's image classification accuracy on the CIFAR-10 test set.

\textbf{Datasets.} To demonstrate the flexibility of our graph \mNNs{}, we train image classifiers that significantly vary in size and architecture.
For our ``Varying CNNs'' dataset, we train about $\num{30000}$ 
basic 2D CNNs varying in common architectural design choices like hidden dimension, number of convolution layers, number of fully connected layers, and type of normalization layers (BatchNorm~\citep{ioffe2015batch} or GroupNorm~\citep{wu2018group}). 

For our ``Diverse Architectures'' dataset, we also train four other diverse image classifiers to test our model's generalization with multiple architectures: (i) basic 1D CNNs treating images as raster ordered sequences of pixels, (ii) DeepSets treating images as pixel sets, which maintains pixel position information with positional encodings~\citep{zaheer2017deep}, (iii) ResNets~\citep{he2016deep}, and (iv) Vision Transformers~\citep{dosovitskiy2021an}. This dataset also totals about $\num{30000}$ networks. The Vision Transformers' patch embedding module can be viewed as a convolution, which is how we encode this module as a graph (the Transformer layer graph encodings are described in Figure~\ref{fig:additional_param_graphs}).

Figure~\ref{fig:acc_hists} shows these trained networks span a wide range of accuracies, between 10\% and 77.5\% for Varying CNNs and between 7.5\% and 87.8\% for Diverse Architectures. Also, the number of parameters in these networks ranges from $\num{970}$ to $\num{21165}$ in Varying CNNs and from $\num{970}$ to $\num{87658}$ for Diverse Architectures. These networks are significantly more diverse and achieve higher accuracy than the dataset of \citet{unterthiner2020predicting}, who train small CNNs of a fixed architecture that obtain at most 56\% test accuracy on CIFAR-10. Also, our Diverse Architectures dataset contains many more network types and modules than the datasets of \citet{eilertsen2020classifying} and \citet{schurholt2022model}, which are limited to CNNs.

\begin{figure}[!t]
    \centering
    \begin{tikzpicture}
        \centering
        \node (img11){\includegraphics[trim={.82cm .85cm .0cm .82cm},clip,width=.31\linewidth]{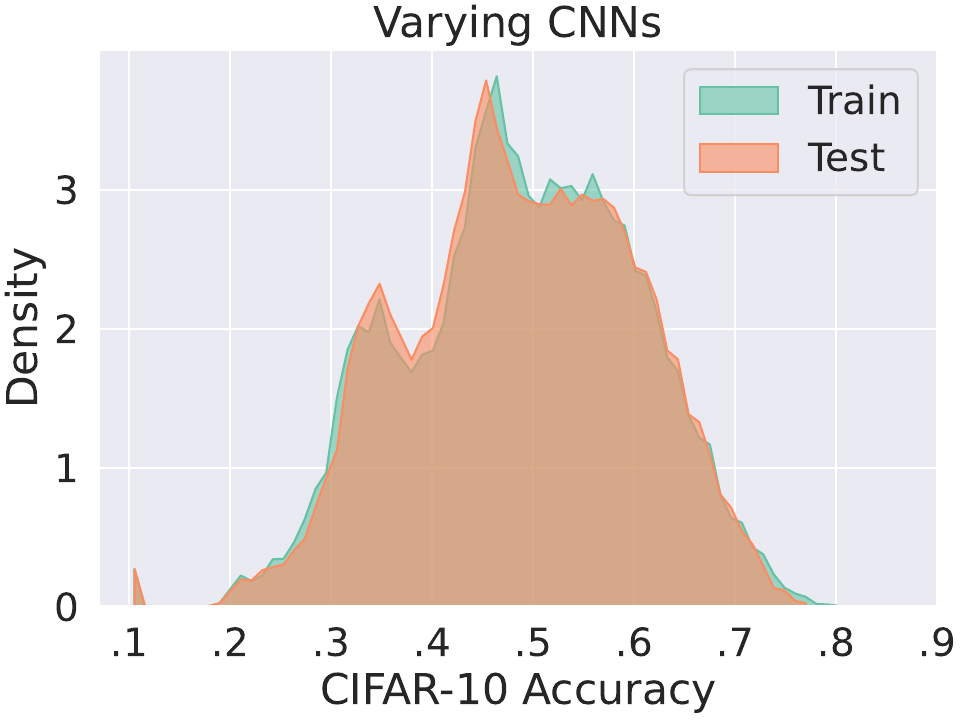}};

        \node[left=of img11, node distance=0cm, rotate=90, xshift=.9cm, yshift=-.9cm, font=\color{black}] {Density};
        
        \node[above=of img11, node distance=0cm, xshift=-.1cm, yshift=-1.23cm,font=\color{black}]{Varying CNNs};

        \node [right=of img11, xshift=-1.2cm](img12){\includegraphics[trim={.82cm .85cm .0cm .8cm},clip,width=.31\linewidth]{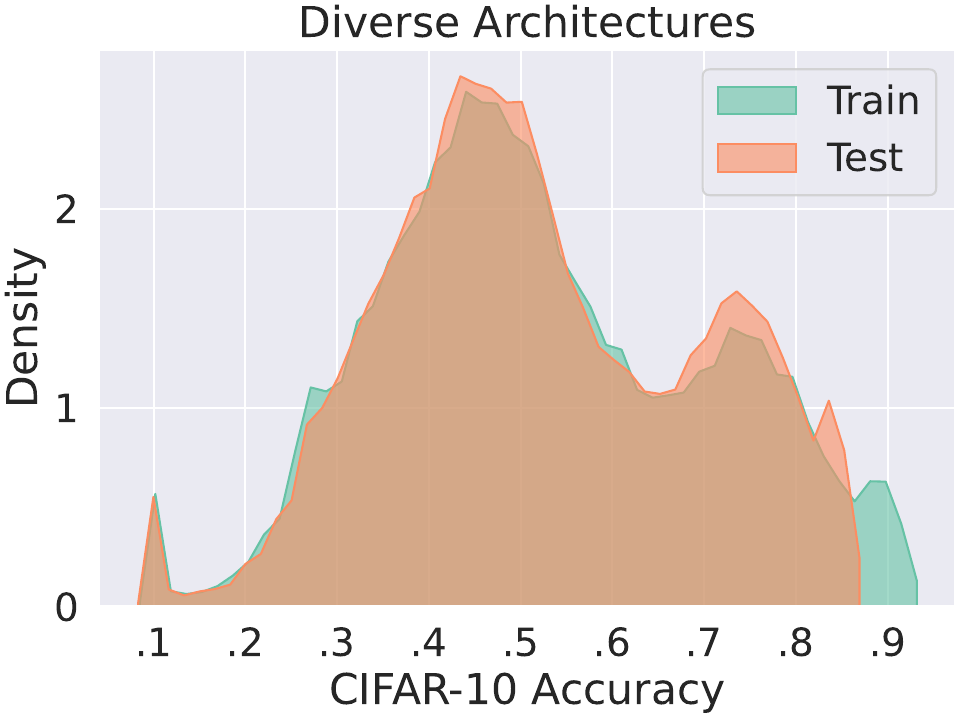}};
        \node[above=of img12, node distance=0cm, xshift=.2cm, yshift=-1.2cm,font=\color{black}]{Diverse Architectures};

        \node[below=of img12, node distance=0cm, xshift=-.1cm, yshift=1.2cm,font=\color{black}]{CIFAR-10 Accuracy};
        
        \node [right=of img12, xshift=-1.2cm](img13){\includegraphics[trim={.82cm .85cm .0cm .8cm},clip,width=.31\linewidth]{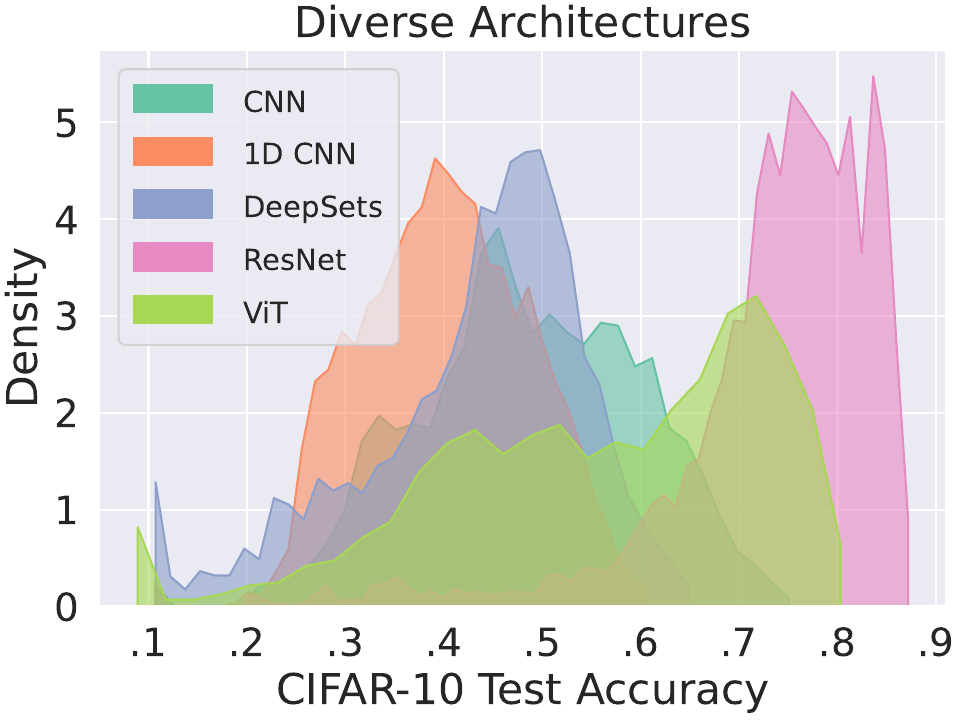}};
        \node[above=of img13, node distance=0cm, xshift=.0cm, yshift=-1.2cm,font=\color{black}]{Diverse Architectures, Test};
    \end{tikzpicture}
    \vspace{-0.02\textheight}
    \caption{Histograms of CIFAR-10 accuracies for our Varying CNNs and Diverse Architectures datasets. Left and middle show train and test accuracy for the two datasets. Right shows test accuracy of Diverse Architectures split by model type.}
    \label{fig:acc_hists}
    \vspace{-0.01\textheight}
\end{figure}

\textbf{Metanetworks.} For our graph \mNN{}, we consider a simple message passing GNN as in Section~\ref{sec:gnn} that does not use a global graph feature. To obtain an invariant prediction, we mean-pool over edge representations.

We cannot apply competing permutation equivariant methods like DWSNet~\citep{navon2023equivariant} or NFN~\citep{zhou2023permutation}, because they cannot process normalization layers, input neural networks of different sizes, or modules like self-attention.  Instead, as a baseline, we consider the Deep Meta Classifier (DMC) from \citet{eilertsen2020classifying}, which vectorizes an input network's parameters and processes it with a 1D CNN, allowing the use of differently sized networks. We also consider a baseline that treats the parameters as a set and applies a DeepSets network~\citep{zaheer2017deep} to output a scalar prediction. Note that the DeepSets baseline is invariant to permutation parameter symmetries, but it is also invariant to permutations that do not correspond to parameter symmetries (which significantly outnumber permutation parameter symmetries), so it has low expressive power.

We evaluate our method and the two baselines across six different data settings. Using both the Varying CNN dataset and the Diverse Architectures dataset. We explore training on about half of the input networks, training on only 10\% of this previous split, and an out-of-distribution (OOD) generalization setting where we train on a reduced set of architectures that have smaller hidden dimension than the held-out architectures.

\textbf{Results.} See Table~\ref{tab:pred_acc} for quantitative results and Figure~\ref{fig:scatter_acc} \rebut{in the Appendix} for scatterplots in the OOD setting on Diverse Architectures. We report the R-Squared value and the Kendall $\tau$ coefficient of the predicted generalization against the true generalization. Our \GMNs{} outperform both baselines in predicting accuracy for input networks across all six data settings. When we restrict to 10\% of the full training set size, we see that \GMNs{} generalize substantially better than the baselines. This performance gap is maintained in the more challenging OOD generalization setting, where the non-equivariant DMC performs very poorly in $R^2$. The improved \GMN{} performance could be from high expressive power (which the DeepSets baseline lacks), with better generalization due to equivariance to parameter permutation symmetries (which DMC lacks).

\subsection{\rebut{Editing 2D INRs}}\label{sec:edit_2d}

\begin{table}[ht]
    \vspace{-20pt}
    \parbox{.46\linewidth}{
    \centering
    \rebut{
    \caption{Test MSE (lower is better) for editing 2D INRs, following the methodology of \citep{zhou2023permutation}. Results of baselines are from \citep{zhou2023permutation, zhou2023neural}.}
    {\small
    \begin{tabular}{lll}
    \toprule
         Metanetwork & Contrast & Dilate  \\
         \midrule
         MLP & .031 & .306 \\
         MLP-Aug & .029 & .307 \\
         NFN-PT & .029 & .197 \\
         NFN-HNP & .0204 $\pm.0000$ & .0706 $\pm.0005$ \\
         NFN-NP & .0203 $\pm.0000$ & .0693 $\pm.0009$ \\
         NFT & .0200 $\pm.0002$ & \bf .0510 $\pm.0004$ \\
         \midrule
         \GMN{} (ours) & \bf .0197 $\pm.0000$ & .0603 $\pm.0010$ \\
         \bottomrule
    \end{tabular}
    }
    \label{tab:edit_2d_inr}
    }
    }
    \hfill
    \parbox{.49\linewidth}{\centering
    \caption{Results for self-supervised learning of neural net representations, in test MSE of a linear regressor on the learned representations. Numbers besides \GMN{} from \citet{navon2023equivariant}.}
    {\small
    \begin{tabular}{lc}
    \toprule
         Metanetwork & Test MSE \\
         \midrule
         MLP & 7.39 $\pm .19$  \\
         MLP + Perm. aug & 5.65 $\pm .01$  \\
         MLP + Alignment & 4.47 $\pm .15$ \\
         INR2Vec (Arch.) & 3.86 $\pm .32$ \\
         Transformer & 5.11 $\pm.12$ \\
         DWSNets & 1.39 $\pm .06$ \\
         \midrule
         \GMN{} (ours) &  \bf  1.13 $\pm.08$ \\
         \bottomrule
    \end{tabular} 
    }
    \label{tab:ssl}
    }
\end{table}

\rebut{Next, we empirically test the ability of \GMNs{} to process simple MLP inputs to compare against less-flexible permutation equivariant \mNNs{} such as NFN~\citep{zhou2023permutation} and NFT~\citep{zhou2023neural}. For this, we train metanetworks on the 2D INR editing tasks of \citep{zhou2023permutation}, where the inputs are weights of an INR representing an image, and the outputs are weights of an INR representing the image with some transformation applied to it.
}

\rebut{
Table~\ref{tab:edit_2d_inr} shows our results. We see that our \GMNs{} outperform most metanetworks on these simple input networks. In particular, \GMN{} outperforms all methods on the Contrast task, and is only beat by NFT on the Dilate task.
}

\subsection{Self-Supervised Learning with INRs}

We also compare \GMNs{} against another less-flexible permutation equivariant \mNN{}, DWSNets~\citep{navon2023equivariant}, in the self-supervised learning experiments of \citet{navon2023equivariant}. Here, the input data are MLPs fit to sinusoidal functions of the form $x \mapsto a\sin(bx)$, where $a, b \in \RR$ are varying parameters. The goal is to learn a \mNN{} encoder that gives strong representations of input networks, using a contrastive-learning framework similar to SimCLR~\citep{chen2020simple}. \rebut{Our \GMNs{} learn edge representations, which are then mean-pooled to get a vector representation of each input network.} The downstream task for evaluating these representations is fitting a linear model on the \mNN{} representations to predict the coefficients $a$ and $b$ of the sinusoid that the input MLP represents.

Table~\ref{tab:ssl} shows the results. \GMNs{} outperform all baselines in this task. Thus, even in the most advantageous setting for competitors, which they are restricted to, \GMNs{} are empirically successful at \mNN{} tasks.

\section{Conclusion}

In this work, we proposed Graph Metanetworks, an approach to processing neural networks with theoretical and empirical benefits. Theoretically, our approach satisfies permutation parameter symmetry equivariance while having provably high expressive power. Empirically, we can process diverse neural architectures, including layers that appear in state-of-the-art models, and we outperform existing metanetwork baselines across all tasks that we evaluated.

\paragraph{Limitations} We make substantial progress towards improving the scalability of \GMNs{} by introducing parameter graphs. However, large neural networks can have billions of parameters and processing them may be more difficult.
We believe that standard scalable GNN methods can be used to scale to the billion parameter regime (as existing GNNs are capable of processing graphs with billions of edges~\citep{hu2021ogb} using modest computing resources), but we have not yet tried to use Graph \MNNs{} at such a scale.
Additionally, we argue that parameter graphs are easier to design than specialized architectures of prior work, but we do not give formal constraints on their design. Further work investigating parameter graphs is promising; for instance, our theory of neural DAG automorphisms depends on the more expensive computation graphs, but it could possibly be extended to parameter graphs. Moreover, our approach only accounts for permutation-based parameter symmetries and does not account for e.g. symmetries induced by scaling weights in ReLU networks~\citep{dinh2017sharp, godfrey2022on}.

\paragraph{Future work} Our graph-based approach to \mNNs{} is promising for future development. This paper mostly uses a basic message-passing GNN architecture, which can be further improved using the many GNN improvements in the literature. Furthermore, our theoretical developments largely apply to computation graphs and we expect future work can extend these results to the more practical parameter graphs. 
Since graph \mNNs{} can process modern neural network layers like self-attention and spatial INR feature grids, they can be used to process and analyze state-of-the-art neural networks. Further, future work could try applying graph \mNNs{} to difficult yet impactful \mNNs{} tasks such as pruning, learned optimization, and finetuning pretrained models.

\subsubsection*{Acknowledgments}
We thank Matan Atzmon, Jiahui Huang, Karsten Kreis, Francis Williams, and Xiaohui Zeng for helpful comments. We would also like to thank Jun Gao, Or Perel, and Frank Shen for helpful input on some of our early INR experiments. DL is funded by an NSF Graduate Fellowship. HM is the Robert J. Shillman Fellow, and is supported by the Israel Science Foundation through a personal grant (ISF 264/23) and an equipment grant (ISF 532/23).

\bibliography{references}
\bibliographystyle{iclr2024_conference}

\newpage
\appendix

\begin{table*}[h]\caption{Glossary and notation}
    \begin{center}
        \begin{tabular}{c c}
            \toprule
            $\MLP$ & Multilayer Perceptron\\
            CNN & Convolutional Neural Network\\
            GNN & Graph Neural Network\\
            GMN & Graph Metanetwork\\
            INR & Implicit Neural Representation\\
            $x, y, z, \dots \in \mathbb{R}$ & Scalars\\
            $\boldsymbol{x}, \boldsymbol{y}, \boldsymbol{z}, \dots \in \mathbb{R}^{n}$ & Vectors\\
            $\boldsymbol{X}, \boldsymbol{Y}, \boldsymbol{Z}, \dots \in \mathbb{R}^{n \times m}$ & Matrices\\
            $\mathcal{X}, \mathcal{Y}, \mathcal{Z}, \dots$ & The domain of $\boldsymbol{x}, \boldsymbol{y}, \boldsymbol{z}, \dots $ \\
            $\inputSize, \nodeDim, \edgeDim, \globalDim \in \mathbb{N}$ & The size of the network input or node/edge/global features\\
            $i, j \in \mathbb{N}$ & An index for edges\\
            $\edgeTuple{i}{j}$ & A directed edge from index $j$ to $i$\\
            $\boldsymbol{P}$ & A permutation matrix\\
            $\boldsymbol{W}$ & A weight matrix in a neural network\\
            $\sigma$ & An element-wise nonlinearity function\\
            $\nnInput \in \inputspace = \RR^{\inputSize}$ & The input for a neural network\\
            $\nnParam \in \nnParamSpace$ & The parameters of an NN\\
            $\nnFunc_{\nnParam}(\nnInput): \inputspace \to \outputspace$ & A function parameterized by an neural network\\
            $\tilde \nnParam \in \nnParamSpace$ & A symmetry of $\nnParam$, which induces the same $\nnFunc$\\
            $\vertexSet$ & The set of vertices\\
            $\edgeSet$ & The set of edges\\
            $\shareSet$ & Set of parameter-sharing constraints\\
            DAG $= (\vertexSet, \edgeSet)$ & A Directed Acyclic Graph\\
            $\perm: V \to V$ & An automorphism\\
            $\edge \in \edgeSet$ & An edge\\
            $\perm^{\edge}: E \to E$ & The edge permutation for an automorphism\\
            $\Bigperm: \nnParamSpace \to \nnParamSpace$ & The network parameter permutation induced by $\perm^{\edge}$\\
            $\edge_{\edgeTuple{i}{j}} \in \RR^{\edgeDim}$ & The features for the edge from $i$ to $j$\\
            $\graphNode_i \in \RR^{\nodeDim}$ & The features for a node\\
            $\globalFeat \in \RR^{\globalDim}$ & A global feature\\
            $\nnAct_\graphNode^\nnParam(\nnInput) \in \RR $ & Activation at node $\graphNode$ for $f_\nnParam$ on input $\nnInput$\\
            \bottomrule
        \end{tabular}
    \end{center}
    \label{tab:TableOfNotation}
    \vspace{-0.03\textheight}
\end{table*}

\section{More Graph Construction Details}
\label{appendix:graph_construction_details}

We implemented a procedure to automatically build the parameter graph given a PyTorch~\citep{paszke2017automatic} definition of a model that uses supported layers. Our approach converts all layers in the network into their parameter subgraph representation and iteratively stitches the subgraphs together to form the overall parameter graph.

This section details how we build the per-layer subgraphs we partially described in Section~\ref{sec:graph_constructions}. We also include additional examples in Figure~\ref{fig:additional_param_graphs}, which we explain below.

\paragraph{Node and edge features} For all subgraph constructions, we use node and edge features to improve the expressive power of our \GMNs{}. Some examples include layer index, neuron type (input neuron index, output neuron index, or layer type), edge direction (when message-passing in an undirected manner), and positional encodings (for parameters with associated spatial features, such as convolutions or the spatial feature grids detailed below). In all cases except for the residual layers, edge features always include the parameter value. All nodes and edges always include integer features for the layer index and neuron type. Importantly, these added features are invariant under neural DAG automorphisms, so adding them does not break the equivariance properties of our Graph \MNNs{}.

\paragraph{Linear layers} For each linear layer, we use node and edge features corresponding to the layer index and the node/edge type. The latter is an integer value identifying that each node/edge is either a linear weight or bias type.

\paragraph{Convolution layers} For convolutions, we include additional edge features to identify each parameter's spatial position in the filter kernel. We use integer coordinates appended to the other edge features that indicate layer index and edge type. The node features include only the layer index and node type.

\newcommand{\dimEquiv}{{d_b}}

\paragraph{Equivariant linear layers} We can also handle general group-equivariant linear layers, of which convolutions are a special case. The high-level idea is the same: each layer has a node for each ``channel'', and each node in the input layer is connected with $\dimEquiv$ edges to each node in the output layer, where $\dimEquiv$ is the dimension of the space of equivariant linear maps.

\begin{wrapfigure}{r}{.4\columnwidth}
    \vspace{-5pt}
    \centering
    \includegraphics[width=.35\columnwidth]{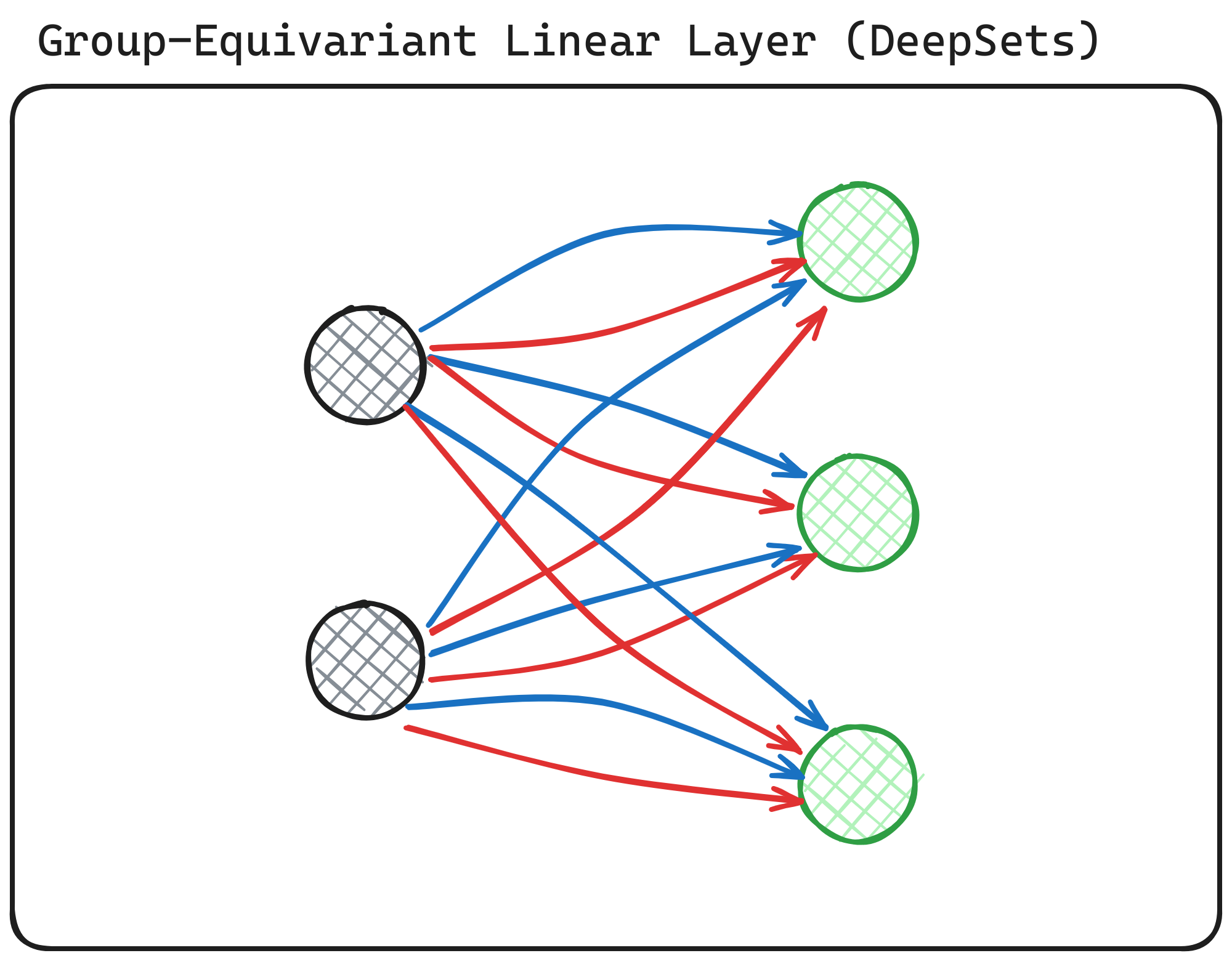}
    \caption{Example of the graph construction for a group-equivariant linear map, for the specific case of the group $S_n$ of permutations  acting on sets of $n$ elements in $\RR^{n \times d}$~\citep{zaheer2017deep}. There are two types of equivariant linear maps for this case. The depicted linear layer maps $\RR^{n \times 2} \to \RR^{n \times 3}$.}
    \label{fig:deepsets}
    \vspace{-10pt}
\end{wrapfigure}

More specifically, let the input space of the layer be $\RR^{n_1}$, the output space be $\RR^{n_2}$, and the symmetry group be $G$. Suppose the group acts via representations $\rho_1(g) \in \GL(n_1)$  on the input and $\rho_2(g) \in \GL(n_2)$ on the output. Then the set of all $G$-equivariant linear maps, i.e. the set of linear maps $T: \RR^{n_1} \to \RR^{n_2}$ such that $T \circ \rho_1(g) = \rho_2(g) \circ T$ for all $g \in G$, forms a vector space of dimension that we denote as $\dimEquiv$~\citep{maron2019invariant,finzi2021practical}. Let $B_1, \ldots, B_{\dimEquiv}$ be a basis for the space of equivariant linear maps. Then a group equivariant linear layer takes the form 
\begin{equation}
x \mapsto \sum_{i=1}^{\dimEquiv} w_i B_i x,
\end{equation}
for some learnable weights $w_i \in \RR$. These weights $w_i$ are the parameters of the layer.

As is often done in practice, this can be extended to multiple input or output feature dimensions (also called channels) as follows. We now let the input space be $\RR^{n_1 \times d_1}$ and the output space be $\RR^{n_2 \times d_2}$, where $d_1$ is the number of input channels and $d_2$ the number of output channels. The group now acts independently on each channel: via representations $\tilde \rho_1(g) = \rho_1(g) \otimes I_{d_1 \times d_1}$ on the input, and $\tilde \rho_2(g) = \rho_2(g) \otimes I_{d_2 \times d_2}$ on the output; e.g. for an input matrix $X \in \RR^{n_1 \times d_1}$, the action is given by $g \cdot X = \rho_1(g) X$. The basis of equivariant linear maps for this space is now has $\dimEquiv d_1 d_2$ elements, and they take the form $B_i \otimes E_{jl}$, where $i \in [\dimEquiv]$, $j \in [d_2]$, and $l \in [d_1]$. An equivariant linear layer then takes the form
\begin{equation}
    X \mapsto \sum_{i=1}^{\dimEquiv} B_i X W^{(i)},
\end{equation}
where $W_i \in \RR^{d_1 \times d_2}$ are learnable weights. To construct a graph for this layer, we let the input layer have $d_1$ nodes, and the output layer have $d_2$ nodes. For $j \in [d_2]$ and $l \in [d_1]$, we connect input node $l$ to input node $j$ with $d_B$ edges. The $i$th edge has as weight $W^{(i)}_{j, l}$. Moreover, the $i$th edge has a feature denoting that it is part of $B_i$ --- in convolutions this is an encoding of the spatial position of the parameter in the kernel.

As an example, consider the permutation equivariant DeepSets linear layer~\citep{zaheer2017deep}. Here, the group $G$ of permutations on $n$ elements acts on $\RR^n$ by permuting coordinates. There are two $G$-equivariant linear maps from $\RR^n \to \RR^n$, $I$ and $\mathbf{1}\mathbf{1}^\top$ (so $d_B = 2$). Thus, the equivariant linear layer from $\RR^{n \times d_1} \to \RR^{n \times d_2}$ takes the form
\begin{equation}
    X \mapsto X\textcolor{blue}{W^{(1)}} + \mathbf{1}\mathbf{1}^\top X \textcolor{red}{W^{(2)}}.
\end{equation}
See Figure~\ref{fig:deepsets} for an illustration of how we encode this layer, as is used for our experiments in Section~\ref{sec:pred_acc}.

\paragraph{Multi-head attention layers} For multi-head attention~\citep{vaswani2017attention}, we can include the head index as an additional node/edge feature for the relevant nodes and edges (though we do not currently do this in our experiments of Section~\ref{sec:pred_acc}, where we only use 2-head attention layers). Otherwise, we treat the edges as standard linear parameters.

\paragraph{Residual layers} No parameters are associated with the edges for residual layers. Consequently, the edge features do not include the parameter values (we replace them with a constant value of $\num{1}$). Otherwise, we include the layer index (for the starting layer) and an integer value identifying the edge as a residual connection.

\begin{figure}[!t]
    \centering
    \includegraphics[width=\linewidth]{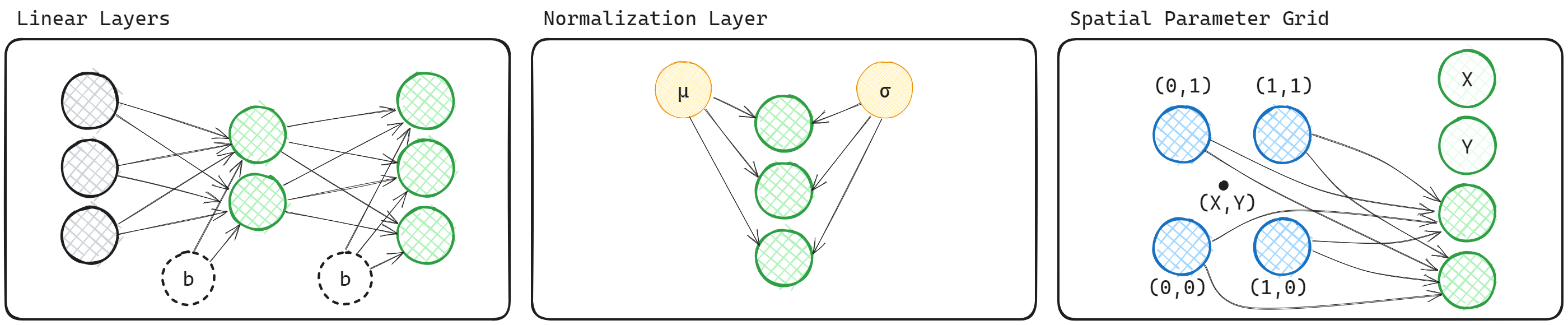}
    \caption{Additional examples of parameter subgraph constructions. We include linear layers in isolation for completeness (left), an example of a normalization layer (middle), and a spatial parameter grid (right).}
    \label{fig:additional_param_graphs}
\end{figure}

\paragraph{Normalization layers} Similar to bias nodes, normalization layers such as BatchNorm~\citep{ioffe2015batch}, LayerNorm~\citep{ba2016layer}, and GroupNorm~\citep{wu2018group} can be represented by two additional nodes corresponding to the learned mean and variance. Edges are drawn between these nodes and the neurons to be normalized. The middle of Figure~\ref{fig:additional_param_graphs} depicts a normalization layer with $3$ neurons. We include the layer index and an integer value for both the node and edge features, identifying them as either the mean/variance and which normalization layer they correspond to. The different types of normalization layers are distinguished via node and edge features: for instance, LayerNorm mean parameter nodes are given a different node label from BatchNorm mean parameter nodes.

\paragraph{Spatial parameter grids} An increasingly common design pattern in Implicit Neural Representations (INRs) introduces a grid of parameters that are accessed according to input coordinates, such as: triplanar grids~\citep{chan2022efficient}, octrees~\citep{takikawa2021nglod}, and hash grids~\citep{muller2022instant}. A typical implementation of these parameter grids involves finding a set of features according to the input coordinates and then using linear interpolation to combine the features. The resulting feature vector is then fed into an MLP to decode the features of a target object.

We show a subgraph construction for a simple 2D parameter grid in the right of Figure~\ref{fig:additional_param_graphs}. The 2D grid contains only four entries at the corners of the domain, with two feature channels. We assume the two feature vectors are bilinearly interpolated based on the input coordinates $(X,Y)$. The resulting vector is concatenated with the input coordinates to produce a four-dimensional vector. In practice, we can add positional encoding of the grid position to the edge features for each element of the spatial parameter grid.

Note that the graph construction for the triplanar grid can largely be reused for dense voxel feature grids, or hash grids~\citep{muller2022instant}. The same principles also apply to sparse feature grids, for example, those using octrees~\citep{takikawa2021nglod}. A recent work by~\citet{cardace2023neural} develops a Transformer-based \mNN{} for processing INRs with triplanar grid features.

\subsection{\rebut{Code for Constructing Parameter Graphs}}

\rebut{We plan to release the code for constructing parameter graphs at a later date. Here, we outline some basics about how the code works. The code takes as input a PyTorch sequential module, which consists of several modules applied in sequence to some data. We iteratively build a graph, starting from the first layer in the input. For each layer in the input (e.g. a linear layer), we convert the layer to a subgraph, using the constructions outlined in Section~\ref{sec:graph_constructions} and Appendix~\ref{appendix:graph_construction_details}. We connect each of each of these subgraphs to the graph that we are iteratively building, until we have done this for all of the layers. We also have a way to invert this process, i.e. to go from parameter graph to neural network module. This is useful for metanet applications where we wish to modify the weights of an input neural network. To do this, we take an architecture specification and the edge weights of a parameter graph, and we then iteratively build a PyTorch module of the correct architecture and parameter values.}

\section{Theory: Equivariance and Neural Graph Automorphisms}\label{appendix:equivariance}

\subsection{DAGs and Computation Graphs}\label{appendix:computation_graph}

Consider a DAG $(\vertexSet, \edgeSet)$, with weights $\nnParam \in \RR^{|\edgeSet|}$ for each edge. We assume throughout that this DAG is connected. We may also have parameter-sharing constraints (e.g. in the case of convolution layers), which is a set $\shareSet$ that partitions the edge set $\edgeSet$, so that if edges $(i_1, j_1)$ and $(i_2, j_2)$ are in the same equivalence class, then $\nnParam_{(i_1, j_1)} = \nnParam_{(i_2, j_2)}$. Further choose a nonlinearity $\sigma: \RR \to \RR$. Let there be $\din$ input nodes in a set $\Vin$, $\dout$ output nodes in $\Vout$, some number of bias nodes in $\Vbias$, and every other node in a set $\Vhidden$. This defines a neural network function $f_{\nnParam}: \RR^{\din} \to \RR^{\dout}$ with computation graph given by the DAG.
To define the function, for an input $\nnInput \in \RR^{\din}$ we first define the \textit{activation} of node $i \in V$ to be a real value $\nnAct_i^\nnParam(\nnInput) \in \RR$, which is defined recursively as follows:
\begin{equation}
    \nnAct_i^\nnParam(\nnInput) = \begin{cases}
        \nnInput_i & i \in \Vin \\
        1 & i \in \Vbias \\
        \sigma\left(\sum_{\edgeTuple{i}{j} \in \edgeSet} \nnParam_{\edgeTuple{i}{j}} \nnAct_j^\nnParam(\nnInput) \right) & i \in \Vhidden\\
        \sum_{\edgeTuple{i}{j} \in \edgeSet} \nnParam_{\edgeTuple{i}{j}} \nnAct_j^\nnParam(\nnInput) & i \in \Vout\\
    \end{cases}
\end{equation}
Note that the activations are well-defined because the graph is a DAG. Here, we slightly abuse notation and assume that nodes $1, \ldots, \din$ are the input nodes. Finally, the neural network function is given by the values of the activations at the output nodes:
\begin{equation}
    f_{\nnParam}(\nnInput) = [\nnAct_{j_1}^\nnParam(\nnInput), \ldots, \nnAct_{j_{\dout}}^\nnParam(\nnInput)] \in \RR^{\dout}.
\end{equation}
Where $j_1, \ldots, j_{\dout} \in \Vout$ are the output nodes. In the main paper, we generally consider our more efficient parameter graphs instead of computation graphs, since computation graphs may be much larger in general due to parameter sharing, in which one parameter is associated with many edges. Nonetheless, we study the theory of computation graphs here, as they are more closely related to the neural network function $f_\nnParam$, and they can be defined unambiguously for feedforward neural networks (whereas we make some choices in our definitions of parameter graphs). We discuss extensions of our theoretical results to parameter networks in Appendix~\ref{appendix:computation_to_parameter}.

\subsection{Neural DAG Automorphisms}\label{appendix:automorphism}

We define an automorphism (of Neural DAGs) to be a bijection $\phi: V \to V$ with the following three properties:
\begin{enumerate}
    \item Edge preservation: $(\phi(i), \phi(j)) \in \edgeSet$ if and only if $(i, j) \in \edgeSet$.
    \item Node label preservation: input nodes, output nodes, and bias nodes are fixed points, meaning they are mapped to themselves.
    \item Weight sharing preservation: if $\nnParam_{(i_1, j_1)}$ and $\nnParam_{(i_2, j_2)}$ are constrained to be equal, then $\nnParam_{(\phi(i_1), \phi(j_1))}$ and $\nnParam_{(\phi(i_2), \phi(j_2))}$ are also constrained to be equal.
\end{enumerate}
In particular, node label preservation means that if $i$ and $j$ are two distinct input, output, or bias nodes, then $\phi(i) = i$ and $\phi(j) = j$, and we cannot have $\phi(i) = j$.

\subsection{Neural DAG Automorphisms Generalize Known Parameter Symmetries}\label{appendix:mlp_cnn_symmetry}

This result shows that every hidden neuron permutation in an MLP is an automorphism, and also that every automorphism of an MLP DAG takes this form.
\begin{proposition}
    A permutation $\phi: \vertexSet \to \vertexSet$ of an MLP DAG is a Neural DAG Automorphism if and only if $\phi$ only permutes hidden neurons, and $\phi(i)$ is in the same layer as $i$ for each hidden neuron $i$.
\end{proposition}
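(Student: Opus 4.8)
The plan is to prove the two directions of the equivalence separately, treating the "if" direction as a routine verification of the three automorphism axioms and concentrating effort on the "only if" direction, where layer-preservation is the real content.

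For the easy ("if") direction, I would take a $\phi$ that fixes every input, output, and bias node and maps each hidden neuron to a hidden neuron in the same layer, and check the three conditions from Appendix~\ref{appendix:automorphism}. Node-label preservation holds by construction. Weight-sharing preservation is vacuous for an MLP, since no two distinct weights are tied (the partition $\shareSet$ consists of singletons). For edge preservation I would use that an MLP is fully connected between consecutive layers: every edge joins a node in some layer $\ell$ to a node in layer $\ell+1$ (or a bias node to the neurons of its layer), and since $\phi$ preserves layer membership and is a bijection whose inverse also preserves layers, it carries the complete bipartite edge set between layers $\ell$ and $\ell+1$ onto itself, and likewise for bias edges; hence $\edgeTuple{i}{j} \in \edgeSet$ iff $\edgeTuple{\phi(i)}{\phi(j)} \in \edgeSet$.

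For the ("only if") direction, suppose $\phi$ is a Neural DAG Automorphism. First, node-label preservation immediately gives that every input, output, and bias node is a fixed point, so $\phi$ restricts to a bijection of the remaining nodes; since these are exactly $\Vhidden$, $\phi$ permutes only hidden neurons. The substantive claim is that $\phi$ fixes the layer of each hidden neuron. I would prove this by a graph-distance argument: define $d(v)$ to be the length of a directed path from an input node to $v$. In a strictly layered MLP with no skip connections, every such path to a layer-$\ell$ node has length exactly $\ell$, so $d$ is well-defined and equals the layer index on all non-input neurons; bias nodes, being sources with no incoming edges, never lie on such paths and so do not corrupt these distances. Because $\phi$ preserves directed adjacency (edge preservation) and fixes the input nodes, it maps directed input-to-$v$ paths to directed input-to-$\phi(v)$ paths of the same length, so $d(\phi(v)) = d(v)$. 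Combined with $\phi(\Vhidden) = \Vhidden$, this forces $\phi(v)$ to be a hidden neuron in the same layer as $v$.

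The main obstacle is this layer-preservation step: one must argue that the abstract edge-preservation property of an automorphism rigidly controls the layer index, which is why I would reduce the notion of "layer" to the intrinsic, automorphism-invariant quantity $d(v)$. The one subtlety to handle carefully is the role of bias nodes --- confirming they are sources and therefore cannot shorten or lengthen any input-to-neuron path --- so that $d$ genuinely recovers the layer structure; once that is settled, the preservation of $d$ under $\phi$ is immediate.
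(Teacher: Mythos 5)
Your proof is correct and follows essentially the same route as the paper's: node-label preservation forces $\phi$ to permute only hidden neurons, a path-length-from-inputs invariance argument gives layer preservation, and full connectivity between adjacent layers gives edge preservation in the converse direction. Your treatment is in fact slightly more careful than the paper's, since you explicitly verify well-definedness of the layer-distance function and handle bias nodes and bias edges, which the paper's proof glosses over.
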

\begin{proof}
    ($\implies$) Suppose $\phi$ is a Neural DAG Automorphism. Then because of the node label preservation property, we know that $\phi$ maps each input neuron, output neuron, and bias neuron to itself, meaning that $\phi$ only permutes hidden neurons. Moreover, since $\phi$ is a graph isomorphism, it does not change path lengths: the maximum path length between $i$ and $j$ is the same as that between $\phi(i)$ and $\phi(j)$. Hence, $\phi$ preserves layer number of each hidden neuron.
    
    ($\impliedby$) Suppose $\phi$ only permutes hidden neurons and preserves the layer of each hidden neuron. We need only show edge preservation to show that $\phi$ is a Neural DAG Automorphism. If $(i,j) \in \edgeSet$, then we know that $\mathrm{layer}(j) = \mathrm{layer}(i)-1$, so $\mathrm{layer}(\phi(j)) = \mathrm{layer}(\phi(i))-1$. As MLPs are fully connected between adjacent layers, this means that $(\phi(i), \phi(j)) \in \edgeSet$. Analogously, if $(\phi(i), \phi(j)) \in \edgeSet$, then $\mathrm{layer}(j) = \mathrm{layer}(i)-1$, so $(i, j) \in \edgeSet$. We have thus showed edge preservation and are done.
\end{proof}

For this next proposition, we will analyze the symmetries of simple CNN parameter spaces. In particular, we will assume the CNN only consists of convolutional layers (and no pooling, or fully connected layers). Every neuron has a spatial position and a channel index. For instance, input neurons in a $\num{32} \times \num{32}$ input with $\num{3}$ channels are indexed by a spatial position $(x,y) \in \{1, \ldots, 32\} \times \{1, \ldots, 32\}$, and a channel $c \in \{1, 2, 3\}$. We say that a permutation $\phi: \vertexSet \to \vertexSet$ \textit{permutes hidden channels} if it only permutes hidden neurons, $\phi(i)$ has the same spatial position as $i$, and $\mathrm{channel}(i) = \mathrm{channel}(j)$ if and only if $\mathrm{channel}(\phi(i)) = \mathrm{channel}(\phi(j))$.

Here, we can show that any permutation of hidden channels that preserves neuron layers is a Neural DAG Automorphism, though we do not show the converse. Nonetheless, this direction shows that Neural DAG Automorphisms capture the known permutation parameter symmetries of CNNs, and thus our Graph Metanetwork framework can handle these symmetries.
\begin{proposition}
    Let $\phi: \vertexSet \to \vertexSet$ be a permutation of a CNN computation graph that permutes hidden channels and preserves neuron layers.  Then $\phi$ is a Neural DAG Automorphism.
\end{proposition}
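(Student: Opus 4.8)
The plan is to verify directly the three defining properties of a Neural DAG Automorphism from Appendix~\ref{appendix:automorphism}, feeding each property the appropriate hypothesis. Since the statement only asserts one direction (every channel permutation preserving layers is an automorphism), no converse bookkeeping is needed. Node label preservation is immediate: by assumption $\phi$ only permutes hidden neurons, so every node in $\Vin$, $\Vout$, and $\Vbias$ is a fixed point. The work is therefore concentrated in edge preservation and weight-sharing preservation, which rely respectively on $\phi$ preserving layers and exact spatial positions, and on $\phi$ preserving channel-equality classes.

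For edge preservation I would first record the combinatorial description of the CNN computation graph's edge set: an edge $(i,j)\in\edgeSet$ exists precisely when $j$ lies in the layer immediately preceding that of $i$ and the spatial offset between $\mathrm{pos}(i)$ and $\mathrm{pos}(j)$ falls inside the kernel support (with the convention dictated by the stride and padding). The decisive observation is that edge existence is \emph{independent of channel indices}, since a convolution connects every input channel to every output channel at each valid spatial offset. Because $\phi$ preserves both $\mathrm{layer}(\cdot)$ and $\mathrm{pos}(\cdot)$ exactly, the layer-adjacency predicate and the spatial-offset predicate hold for $(i,j)$ if and only if they hold for $(\phi(i),\phi(j))$, giving $(i,j)\in\edgeSet \iff (\phi(i),\phi(j))\in\edgeSet$.

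For weight-sharing preservation I would use the standard description of convolutional weight tying: two weight-bearing edges carry the same parameter exactly when they agree in output channel, input channel, and spatial offset. Given a tied pair $(i_1,j_1),(i_2,j_2)$, spatial-offset equality transfers to the image because $\phi$ fixes spatial positions, so $\mathrm{pos}(\phi(i_1))-\mathrm{pos}(\phi(j_1)) = \mathrm{pos}(i_1)-\mathrm{pos}(j_1) = \mathrm{pos}(i_2)-\mathrm{pos}(j_2) = \mathrm{pos}(\phi(i_2))-\mathrm{pos}(\phi(j_2))$. Channel equality transfers by the defining property of ``permutes hidden channels'': from $\mathrm{channel}(i_1)=\mathrm{channel}(i_2)$ we obtain $\mathrm{channel}(\phi(i_1))=\mathrm{channel}(\phi(i_2))$, and likewise for $j_1,j_2$; here I would note that $\phi$ preserving layers keeps $i_1,i_2$ (and $j_1,j_2$) in a common layer so that the channel comparison is meaningful. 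Hence $\Bigperm(\nnParam)$ respects the same tying, and bias edges (shared across spatial positions within an output channel, and incident to the $\phi$-fixed bias node) are handled by the identical argument.

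The main obstacle is not any single verification but pinning down the formal edge set and the weight-sharing partition of the CNN computation graph cleanly enough that all three checks become routine — in particular the stride/padding bookkeeping at spatial boundaries, and ensuring the channel comparisons are performed within a single layer. Once edge existence is expressed purely as a predicate on (layer, spatial position) and weight tying purely as a predicate on (input channel, output channel, spatial offset), invariance under $\phi$ follows because $\phi$ preserves layers and positions exactly while preserving channel-equality classes, and the proof closes.
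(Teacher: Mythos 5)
Your proposal is correct and follows essentially the same route as the paper's proof: verify the three defining properties directly, with node-label preservation immediate, and edge preservation following from the fact that edge existence in a CNN computation graph depends only on layer adjacency and spatial offsets, both of which $\phi$ preserves. The only difference is one of detail rather than method — the paper simply asserts weight-sharing preservation, whereas you spell out the tying predicate (input channel, output channel, spatial offset) and check its invariance under $\phi$, which is a welcome elaboration of a step the paper leaves implicit.
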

\begin{proof}
    Suppose $\phi$ only permutes hidden channels and preserves the layer of each hidden neuron. Then $\phi$ satisfies node label preservation, and also weight sharing preservation. Now, consider an edge $(i, j) \in \edgeSet$, so $\mathrm{layer}(j) = \mathrm{layer}(i)-1$, and also the spatial positions of $i$ and $j$ fit within the convolution filter of this layer. Since $\phi$ preserves layers, we have $\mathrm{layer}(j) = \mathrm{layer}(i)-1$. Further, since $\phi$ only permutes hidden channels, the spatial position of $\phi(i)$ is the same as $i$, and likewise the spatial position of $\phi(j)$ is the same as $j$. Thus, $(\phi(i), \phi(j)) \in \edgeSet$. A similar argument show that $(\phi(i), \phi(j))$ being in $\edgeSet$ implies that $(i,j) \in \edgeSet$, so we are done.
\end{proof}

\subsection{Neural DAG Automorphisms Preserve Network Functions}
\label{appendix:autuomorphisms_preserve}

\begin{proposition}
    For any neural DAG automorphism $\perm$ of a computation graph, the neural network function is left unchanged: $\forall \nnInput \in \inputspace, \nnFunc_{\nnParam}(\nnInput) = \nnFunc_{\Bigperm(\nnParam)}(\nnInput)$.
\end{proposition}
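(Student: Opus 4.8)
The plan is to prove a stronger, node-level statement about activations and then specialize it to the output nodes. Concretely, I would show that for every node $i \in \vertexSet$ and every input $\nnInput$,
\begin{equation}
\nnAct_{\perm(i)}^{\Bigperm(\nnParam)}(\nnInput) = \nnAct_i^\nnParam(\nnInput). \label{eq:activation_claim}
\end{equation}
Once \eqref{eq:activation_claim} is established, the proposition follows at once: by the node-label preservation property every output node is a fixed point of $\perm$, so taking $i \in \Vout$ gives $\nnAct_i^{\Bigperm(\nnParam)}(\nnInput) = \nnAct_i^\nnParam(\nnInput)$ for each output node, and hence $\nnFunc_{\Bigperm(\nnParam)}(\nnInput) = \nnFunc_\nnParam(\nnInput)$ for all $\nnInput$.

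To prove the claim I would induct on the nodes in topological order — equivalently, on the length of the longest directed path from an input or bias node to $i$ — so that every in-neighbor of $i$ has strictly smaller depth and is therefore covered by the inductive hypothesis. For the base cases, if $i \in \Vin$ then node-label preservation gives $\perm(i) = i$, and both sides of \eqref{eq:activation_claim} equal $\nnInput_i$; if $i \in \Vbias$, then again $\perm(i) = i$ and both sides equal $1$.

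For the inductive step, take a hidden or output node $i$ and expand the left-hand side using the recursive definition of the activation at $\perm(i)$,
\[
\nnAct_{\perm(i)}^{\Bigperm(\nnParam)}(\nnInput) = \sigma\!\left( \sum_{(\perm(i),k) \in \edgeSet} \Bigperm(\nnParam)_{(\perm(i),k)}\, \nnAct_k^{\Bigperm(\nnParam)}(\nnInput) \right),
\]
with the outer $\sigma$ omitted when $i \in \Vout$. The heart of the argument is to reindex this sum by writing $k = \perm(j)$: since $\perm$ is a bijection, every neighbor $k$ of $\perm(i)$ is uniquely of this form, and by edge preservation $(\perm(i),\perm(j)) \in \edgeSet$ if and only if $(i,j) \in \edgeSet$, so the summation ranges exactly over the in-neighbors $j$ of $i$. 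Substituting the definition $\Bigperm(\nnParam)_{(\perm(i),\perm(j))} = \nnParam_{(i,j)}$ and the inductive hypothesis $\nnAct_{\perm(j)}^{\Bigperm(\nnParam)}(\nnInput) = \nnAct_j^\nnParam(\nnInput)$ for each in-neighbor $j$, the sum collapses to $\sum_{(i,j)\in \edgeSet} \nnParam_{(i,j)}\, \nnAct_j^\nnParam(\nnInput)$, and applying $\sigma$ recovers $\nnAct_i^\nnParam(\nnInput)$, completing the induction.

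I expect the reindexing in the inductive step to be the one place requiring care, since it is where all three automorphism properties enter: bijectivity of $\perm$ to relabel the summation variable, edge preservation (in both directions) to identify the neighbor set of $\perm(i)$ with the image of the neighbor set of $i$, and weight-sharing preservation to ensure that $\Bigperm(\nnParam)$ is a legitimate point of $\nnParamSpace$ on which the computation-graph function is defined. Everything else is routine bookkeeping with the recursive activation definition.
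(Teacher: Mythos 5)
Your proposal is correct and matches the paper's own proof essentially step for step: the same strengthened activation-level claim $\nnAct_{\perm(i)}^{\Bigperm(\nnParam)}(\nnInput) = \nnAct_i^\nnParam(\nnInput)$, the same induction on node depth with input and bias nodes as base cases, and the same inductive step combining the definition of $\Bigperm$, edge preservation, and the inductive hypothesis to collapse the sum. Your explicit reindexing $k = \perm(j)$ via bijectivity is just a slightly more careful rendering of the step the paper performs implicitly by summing over $(\perm(i),\perm(j)) \in \edgeSet$.
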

\begin{proof}
    We will show a stronger statement: namely that the activations of the network with permuted parameters are permuted version of the activations of the original network. In other words, we will show that
    \begin{equation}\label{eq:activation_permute}
        \nnAct_{\graphNode}^\nnParam(\nnInput) = \nnAct_{\perm(\graphNode)}^{\Bigperm(\nnParam)}(\nnInput)
    \end{equation}
    for all nodes $\graphNode \in \vertexSet$. This is sufficient to show the proposition for the following reason.
    Let the output nodes be $j_1, \ldots, j_{\dout}$. Since a neural DAG automorphism $\perm$ maps each output node to itself, if \eqref{eq:activation_permute} holds, then
    \begin{align}
        f_{\Bigperm(\nnParam)}(\nnInput) & = [\nnAct_{j_1}^{\Bigperm(\nnParam)}(\nnInput), \ldots, \nnAct_{j_{\dout}}^{\Bigperm(\nnParam)}(\nnInput)]\\
        & = [\nnAct_{\perm(j_1)}^{\Bigperm(\nnParam)}(\nnInput), \ldots, \nnAct_{\perm(j_{\dout})}^{\Bigperm(\nnParam)}(\nnInput)]\\
        & = [\nnAct_{j_1}^{\nnParam}(\nnInput), \ldots, \nnAct_{j_{\dout}}^{\nnParam}(\nnInput)]\\
        & = f_{\nnParam}(\nnInput).
    \end{align}
    The second line follows as  $\phi$ leaves the output nodes unchanged. The third line follows from \eqref{eq:activation_permute}.
    Thus, it suffices to show \eqref{eq:activation_permute}. We proceed by induction on the layer number of $\graphNode$, i.e. the maximum path distance from an input node to $\graphNode$.

    For the base case, if $\graphNode$ is an input node, then $\perm(\graphNode) = \graphNode$, so
    \begin{equation}
        \nnAct_{\graphNode}^{\nnParam}(\nnInput) = \nnInput_{\graphNode} = \nnAct_{\graphNode}^{\Bigperm(\nnParam)}(\nnInput) = \nnAct_{\perm(\graphNode)}^{\Bigperm(\nnParam)}(\nnInput).
    \end{equation}
    Bias nodes always have activation 1, so this combined with the fact that automorphisms map bias nodes to themselves show \eqref{eq:activation_permute} for bias nodes.

    Now, consider a hidden node $i$ at layer $l > 0$, and suppose that \eqref{eq:activation_permute} holds for all nodes of lower layers. If $i$ is a hidden node, then we have
    \begin{align}
        \nnAct_{\perm(i)}^{\Bigperm(\nnParam)}(\nnInput) & = \sigma\left(\sum_{\edgeTuple{\phi(i)}{\phi(j)} \in \edgeSet} \Bigperm(\nnParam)_{\edgeTuple{\perm(i)}{\perm(j)}} \nnAct_{\perm(j)}^{\Bigperm(\nnParam)}(\nnInput) \right)\\
        & = \sigma\left(\sum_{\edgeTuple{\phi(i)}{\phi(j)} \in \edgeSet} \nnParam_{\edgeTuple{i}{j}} \nnAct_{\perm(j)}^{\Bigperm(\nnParam)}(\nnInput) \right) & \text{definition of } \Bigperm(\nnParam)\\
        & = \sigma\left(\sum_{\edgeTuple{\phi(i)}{\phi(j)} \in \edgeSet} \nnParam_{\edgeTuple{i}{j}} \nnAct_{j}^{\nnParam}(\nnInput) \right) & \text{induction}\\
        & = \sigma\left(\sum_{\edgeTuple{i}{j} \in \edgeSet} \nnParam_{\edgeTuple{i}{j}} \nnAct_{j}^{\nnParam}(\nnInput) \right) & \text{edge preservation property of $\perm$}\\
        & = \nnAct_{i}^{\nnParam}(\nnInput).
    \end{align}
    If $i$ is an output node, then the same derivation holds without the nonlinearity $\sigma$. Thus, we have shown \eqref{eq:activation_permute}, and we are done.
\end{proof}

\subsection{From computation graphs to parameter graphs}
\label{appendix:computation_to_parameter}

We conjecture that Proposition~\ref{proposition:dag_automorphism} can be extended to automorphisms over parameter graphs instead of computation graphs. Here we discuss the formal intuition behind this conjecture.

To prove that the network function remains unchanged after an automorphism of the parameter graph, we could show that any automorphism of the parameter graph corresponds to an automorphism of the computation graph. One way to do this is through a surjective homomorphism from the computation graph to the parameter graph that is locally-injective~\citep{fiala2008locally}. This requires that each 1-neighbourhood in the computation graph is mapped injectively to a 1-neighbourhood in the parameter graph.

Intuitively, this allows us to define an automorphism on the parameter graph and lift the automorphism to the computation graph through the preimage of the homomorphism. Concretely, we apply the parameter graph automorphism to the computation graph by permuting the preimage sets for each node in the parameter graph under the homomorphism.

Using this machinery, we expect that the proof of Proposition~\ref{proposition:dag_automorphism} can be extended to parameter graphs. One point of difficulty is that our parameter graph construction allows for multigraphs --- which are valuable for succinct representations and also make it easier to stitch together subgraphs.

\subsection{Equivariance of Graph Metanetworks to Neural DAG Automorphisms}

Here, we show that graph metanetworks are equivariant to architecture graph symmetries. In particular, consider an architecture graph $G = (\vertexSet, \edgeSet)$. We also allow modifications to the graph, so that $G$ does not need to be the exact DAG of the computation graph; for instance, $G$ can be the undirected version of the computation graph. 
Consider a fixed ordering of the $n$ nodes, and let $X \in \RR^{n \times \nodeDim}$ be the input node features, and $A \in \RR^{n \times n \times \edgeDim}$ be the input edge features. For any permutation $\eta: V \to V$, we define the standard group action: $\eta(X) \in \RR^{n \times \nodeDim}$ is defined by $\eta(X)_{\eta(i), k} = X_{i,k}$, and $\eta(A) \in \RR^{n \times n \times \edgeDim}$ is defined by $\eta(A)_{\eta(i),\eta(j),k} = A_{i,j,k}$.

Consider a graph function that updates node features and edge features, say $\mathGNN: \RR^{n \times \nodeDim} \times \RR^{n \times n \times \edgeDim} \to \RR^{n \times \nodeDim} \times \RR^{n \times n \times \edgeDim}$. For inputs $X$ and $A$, let $\mathGNN(X, A)^X \in \RR^{n \times \nodeDim}$ denote the updated node features, and $\mathGNN(X, A)^A \in \RR^{n \times n \times \edgeDim}$ denote the updated edge features. We say such a function is permutation equivariant if for all permutations $\eta: [n] \to [n]$, it holds that
\begin{align}
    & \mathGNN(\eta(X), \eta(A))^X_{\eta(i), k}  = \mathGNN(X, A)^X_{i, k}\\
    & \mathGNN(\eta(X), \eta(A))^A_{\eta(i), \eta(j), k}  = \mathGNN(X, A)^A_{i, j, k}.
\end{align}
Graph neural networks have exactly this permutation equivariance~\citep{maron2019invariant}. Thus, since any architecture graph symmetry $\tau$ is a permutation $[n] \to [n]$, it holds that graph metanetworks are equivariant to architecture graph symmetries. Thus, we restate our result:
\begin{proposition}
    Graph \mNNs{} are equivariant to parameter permutations induced by neural DAG automorphisms.
\end{proposition}

However, this would also seem to imply that graph metanetworks are equivariant to more permutations than just architecture graph symmetries, which would be undesirable because this would mean reduced expressivity. The key to fixing this problem is via adding the additional node and edge features described in Appendix~\ref{appendix:graph_construction_details}. Recall that we choose additional node and edge features, say $\tilde X$ and $\tilde A$, that are invariant to neural DAG automorphisms. That is, $\phi(\tilde X) = \tilde X$ and $\phi(\tilde A) = \tilde A$ for any neural DAG automorphism $\phi$. Also, we add these features in a fixed manner, so they do not depend on the choice of node indices. Thus, the output node features of the Graph \MNN{} on the original graph are $\mathGNN([X, \tilde X], [A, \tilde A])^X$, whereas after applying a permutation they are $\mathGNN([\eta(X), \tilde X], [\eta(A), \tilde A])^X$. These outputs are in general not equal up to a permutation. However, when $\eta$ is a neural DAG automorphism, they are equal, because
\begin{align}
    & \mathGNN([\phi(X), \tilde X], [\phi(A), \tilde A])_{\eta(i), k}^X \\
    & = \mathGNN([\phi(X), \phi(\tilde X)], [\phi(A),  \phi(\tilde A)])_{\eta(i), k}^X & \tilde X, \tilde A \text{ invariant to } \phi\\
    & = \mathGNN([X, \tilde X], [A, \tilde A])_{i, k}^X & \text{GNN equivariance}.
\end{align}
The same argument holds for output edge features.

\section{Theory: Expressive Power of Graph Metanetworks}

\newcommand{\ftarget}{f_{\mathrm{target}}}

\paragraph{Representation vs. Approximation.} In this Appendix section, we prove results on the expressive power of Graph \MNNs. For these results, there is some continuous target function $\ftarget$ that we wish to approximate (either StatNN, NP-NFN, or forward passes through a neural network). In our proofs, for ease of exposition we show that if we assume that the MLPs of our Graph \MNNs{} can be arbitrary continuous functions, then there is some Graph \MNN{} that is exactly $\ftarget$.

However, MLPs cannot generally express every continuous function. Instead, MLPs are known to be universal approximators of continuous functions~\citep{hornik1991approximation}, meaning that for any continuous $\ftarget$ with inputs from a compact domain, for every precision $\epsilon > 0$ there is an MLP that is within $\epsilon$ (in some function-space metric) to $\ftarget$. We will call any function class with this property a universal function class. If our \mNNs{} were just a single MLP, then it would be trivial to go from an exact representation statement to an approximation statement. However, in general our \mNNs{} are compositions of several MLPs and other operations.

Despite this discrepancy, we can appeal to standard techniques that have been used in recent works in the study of expressive power of equivariant neural networks to show approximation results~\citep{lim2023sign, navon2023equivariant}. In particular, if a neural network architecture can be written as a composition of functions $f_L \circ \cdots \circ f_1$, and each of the functions comes from a universal function class, then the composition also forms a universal function class. Thus, one need only show that each component can be approximated to show that an entire target function can be approximated. This is captured in the following lemma, which is proved in \citet{lim2023sign}.
\begin{lemma}\label{lem:layerwise_universal}
    Let $\mathcal X \subseteq \RR^{d_0}$ be compact, and let $\mathcal F_1, \ldots, \mathcal F_L$ be families of continuous functions, where $\mathcal F_i$ consists of functions from $\RR^{d_{i-1}} \to \RR^{d_i}$. Let $\mathcal F$ be the set of compositions of these functions, so $\mathcal F = \{f_L \circ \cdots \circ f_1: \mathcal X \to \RR^{d_L}, f_i \in \mathcal F_i\}$.

    For each $i$, let $\mathcal{\tilde F}_i$ be a family of continuous functions that universally approximates $\mathcal F_i$. Then the family of compositions $\mathcal{\tilde F} = \{\tilde f_L \circ \cdots \circ \tilde f_1: \tilde f_i \in \mathcal{\tilde F}_i\}$ universally approximates $\mathcal F$.
\end{lemma}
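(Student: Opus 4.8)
The plan is to prove the lemma by a telescoping argument that controls the approximation error one layer at a time, relying on two elementary facts: a continuous image of a compact set is compact, and a continuous function on a compact set is uniformly continuous. The guiding intuition is that an error introduced at layer $k$ is fed through the remaining approximate layers $\tilde f_{k+1}, \ldots, \tilde f_L$; since those are (uniformly) continuous on a suitable compact set, a sufficiently small error at layer $k$ produces an arbitrarily small error at the output.

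Concretely, fix $\epsilon > 0$ and a target $f = f_L \circ \cdots \circ f_1 \in \mathcal F$. First I would define the compact sets $K_0 = \mathcal X$ and $K_j = f_j(K_{j-1}) \subseteq \RR^{d_j}$ for $j = 1, \ldots, L$; each is compact as a continuous image of a compact set, and crucially each is determined by the original $f_j$'s alone, independent of any approximation choices. Next I introduce the hybrid compositions
\[
    h_k = (\tilde f_L \circ \cdots \circ \tilde f_{k+1}) \circ (f_k \circ \cdots \circ f_1), \qquad k = 0, 1, \ldots, L,
\]
so that $h_L = f$ and $h_0 = \tilde f := \tilde f_L \circ \cdots \circ \tilde f_1$. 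The telescoping identity $\tilde f - f = \sum_{k=1}^L (h_{k-1} - h_k)$ together with the triangle inequality reduces the task to bounding each term by $\epsilon / L$ uniformly on $\mathcal X$.

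The key is the order of selection: I choose the approximations top-down, $\tilde f_L$ first, then $\tilde f_{L-1}$, and so on. Having fixed $\tilde f_{k+1}, \ldots, \tilde f_L$, the tail $T_k = \tilde f_L \circ \cdots \circ \tilde f_{k+1}$ is a fixed continuous map; I let $C_k$ be the closed unit neighborhood of $K_k$ (compact), take the modulus of continuity $\omega_k$ of $T_k$ on $C_k$, and choose $\delta_k \in (0,1]$ with $\omega_k(\delta_k) < \epsilon/L$. Then, using that $\tilde{\mathcal F}_k$ universally approximates $\mathcal F_k$ on the compact set $K_{k-1}$, I select $\tilde f_k$ with $\sup_{z \in K_{k-1}} \|\tilde f_k(z) - f_k(z)\| < \delta_k$. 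For any $x \in \mathcal X$, writing $z = (f_{k-1} \circ \cdots \circ f_1)(x) \in K_{k-1}$, both $f_k(z) \in K_k$ and $\tilde f_k(z)$ (within $\delta_k \le 1$ of $f_k(z)$) lie in $C_k$, so
\[
    \|h_{k-1}(x) - h_k(x)\| = \|T_k(\tilde f_k(z)) - T_k(f_k(z))\| \le \omega_k(\delta_k) < \epsilon/L.
\]
Summing over $k$ gives $\|\tilde f(x) - f(x)\| < \epsilon$ for all $x \in \mathcal X$, as desired.

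The main obstacle — really the only subtle point — is the apparent circularity: the per-layer error bound for $\tilde f_k$ depends on the tail $T_k$, which is built from approximations not yet chosen. This is resolved precisely by choosing the $\tilde f_k$ in decreasing order of $k$, so that $T_k$ is already fixed when $\delta_k$ and $\tilde f_k$ are determined. A secondary care point is ensuring the perturbed intermediate value $\tilde f_k(z)$ remains inside a fixed compact set on which $T_k$ is uniformly continuous; capping $\delta_k \le 1$ and enlarging $K_k$ to its closed unit neighborhood $C_k$ handles this cleanly. Everything else is routine bookkeeping with the triangle inequality and moduli of continuity.
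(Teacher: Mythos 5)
Your proof is correct: the compact-image sets $K_j$, the hybrid compositions $h_k$, the top-down choice of $\tilde f_L, \ldots, \tilde f_1$, and the modulus-of-continuity bound on each telescoping increment fit together with no gaps. One thing to know is that the paper does not prove this lemma in-house; it defers entirely to \citet{lim2023sign}, and the argument there (the more common route for such composition lemmas) telescopes in the opposite direction from yours. It inserts hybrids with the \emph{exact} tail and \emph{approximate} prefix, i.e. terms like $f_2 \circ \tilde f_1$, bounding $\|\tilde f_2(\tilde f_1(x)) - f_2(\tilde f_1(x))\|$ by approximating $f_2$ on a slightly enlarged compact neighborhood of $K_1$ (to absorb the drift of the approximate prefix), and bounding $\|f_2(\tilde f_1(x)) - f_2(f_1(x))\|$ by uniform continuity of the \emph{target} $f_2$. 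The advantage of that decomposition is that every tolerance and every enlarged domain depends only on the target functions $f_1, \ldots, f_L$, so the per-layer approximators can be chosen independently of one another and the induction is cleaner --- there is no circularity to break. Your decomposition instead leans on uniform continuity of the already-chosen approximate tails $T_k$, which is precisely why you need the top-down selection order; that is a perfectly sound fix, and it has the minor payoff that each $\tilde f_k$ only needs to approximate $f_k$ on the exact image $K_{k-1}$ rather than on an enlarged neighborhood (the enlargement enters your argument only as the domain $C_k$ on which the modulus of continuity is taken).
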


\subsection{Proof: Simulating Existing Metanetworks}\label{appendix:express_prior_metanets}

\subsubsection{Simulating StatNN}

In its most expressive form, StatNN~\citep{unterthiner2020predicting} computes statistics from each weight and bias of an MLP or CNN input, and then applies an MLP on top of these statistics. The statistics considered by \citet{unterthiner2020predicting} are the mean, variance, and $q$th percentiles for $q \in \{0, 25, 50, 75, 100\}$. These are all continuous permutation invariant functions of the input, i.e. for any of these statistics $s$ and flattened weight vector $w$ (e.g. $w = \mathrm{vec}(W^{(l)})$ for a weight matrix $W^{(l)}$ or $w = b^{(l)}$ for a bias vector $b^{(l)}$), it holds that $s(w) = s(Pw)$ for any permutation matrix $P$. This is the key to showing that Graph \MNNs{} can express StatNN, since the global graph feature update can compute these permutation invariant functions, and then the final invariant layer can apply an MLP on the global graph feature.
\begin{proposition}
    Graph Metanetworks can express StatNN~\citep{unterthiner2020predicting} on MLP input networks.
\end{proposition}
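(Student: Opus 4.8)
The plan is to realize StatNN as a computation carried out inside the global feature of a Graph \MNN{}, followed by an invariant readout. Concretely, I would first aggregate the edge-borne parameters into the per-tensor statistics that StatNN consumes, store them in the global feature $\globalFeat$, and then apply StatNN's top \MLP{} to $\globalFeat$. The key observation is that for an MLP input every statistic StatNN uses --- the mean, variance, and $0,25,50,75,100$ percentiles of each weight matrix and each bias vector --- is a continuous, permutation-invariant function of the multiset of edge values, grouped by the tensor they belong to. Since our graph construction attaches to each edge an invariant label encoding its layer index and type (weight vs.\ bias), the pair $(\text{value},\text{label})$ on each edge determines the entire collection of per-tensor statistics, and this collection is unchanged by any relabeling of the edges.

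Following the representation-then-approximation convention of this appendix, I would first exhibit an exact realization assuming the component \MLP{}s can be arbitrary continuous maps. I would use the edge-update map $\MLP^\edge$ of \Eqref{eq:edge_features} as a per-edge embedding $\phi$, the global sum $\sum_{\edge\in\edgeSet}\edge$ of \Eqref{eq:global_features} as the pooling, and the global-update map $\MLP^\globalFeat$ as an outer map $\rho$; together these have precisely the sum-decomposition form $\rho\big(\sum_\edge \phi(\edge)\big)$. For the mean and variance this realization is transparent: let $\phi$ place $w_\edge$ and $w_\edge^2$ into the coordinates indexed by the edge's (layer, type) label together with a constant in a count coordinate, so the global sum yields per tensor the sum of values, sum of squares, and count, from which $\MLP^\globalFeat$ forms the mean and variance (with counts being fixed constants for a fixed architecture). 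For the percentiles I would instead invoke the sum-decomposition universality of DeepSets \citep{zaheer2017deep}: min, max, and interpolated quantiles of a fixed-size multiset over a compact domain are continuous and permutation-invariant, hence admit a representation of the form $\rho(\sum_\edge \phi(\edge))$. A final invariant readout then applies StatNN's MLP to the assembled statistics held in $\globalFeat$.

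The main obstacle is precisely the percentiles: the global update aggregates edges only by summation, whereas order statistics such as the median and the extremes are not additive in the per-edge values. The resolution is that sum-pooling sandwiched between learnable per-element and post-aggregation maps is already a universal sum-decomposition of continuous permutation-invariant functions of bounded multisets, so no max aggregation is required. To convert the exact-representation statement into an end-to-end approximation guarantee with genuine \MLP{}s, I would appeal to the layerwise composition principle of Lemma~\ref{lem:layerwise_universal}: each stage ($\phi$, the sum, $\rho$, and StatNN's top MLP) lies in a universal function class, so their composition universally approximates the full StatNN map on the compact domain of bounded weights. The remaining points requiring care are verifying that the empirical percentile functions are continuous (they are, as order statistics, even though the identity of the achieving edge can jump) and confirming that the per-tensor labels suffice to keep statistics of different layers from being mixed in the pooled sum.
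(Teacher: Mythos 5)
Your proposal is correct and follows essentially the same route as the paper's proof: both encode the layer index and weight/bias type as invariant edge features, use the edge-update map as a per-edge embedding routed by these labels into per-tensor coordinate blocks, sum-pool into the global feature, and invoke the DeepSets sum-decomposition $\rho\bigl(\sum_i \phi(w_i)\bigr)$ of continuous permutation-invariant statistics before applying StatNN's top MLP, with the representation-to-approximation step handled by the layerwise universality lemma. The only cosmetic difference is that you construct the mean and variance explicitly and reserve the DeepSets argument for the percentiles, whereas the paper applies the sum-decomposition theorem uniformly to all of StatNN's statistics.
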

\begin{proof}
\newcommand{\mlpstatnn}{\mathrm{MLP}^{\mathrm{StatNN}}}
    Let $W^{(1)}, \ldots, W^{(L)}$ be the weights and $b^{(1)}, \ldots, b^{(L)}$ the biases of an input MLP. Let $s$ be a continuous permutation invariant function with output in $\RR^{d_s}$, and let $\mlpstatnn: \RR^{2Ld_s} \to \RR$ be the MLP of the StatNN, so
    \begin{equation}
        \mathrm{StatNN}(W^{(1)}, \ldots, W^{(L)}, b^{(1)}, \ldots, b^{(L)}) = \mlpstatnn(s(W^{(1)}), \ldots, s(b^{(L)})).
    \end{equation}
    We will show that a Graph Metanetwork can express this function. Let the edge features $\edge_{(i,j)} \in \RR^3$ be three dimensional, with the first dimension holding the parameter value, the second dimension denoting the layer index, and the third dimension denoting whether the edge is associated to a weight or bias parameter. We will not require node features for this proof. Thus, our Graph Metanetwork will take the form
    \begin{align}
        \edge_{(i,j)} & \gets \MLP^\edge(\edge_{(i,j)})\\
        \globalFeat & \gets \MLP^\globalFeat\left(\sum_{\edge \in E} \edge\right),
    \end{align}
    and the result will be that $\globalFeat \in \RR$ will be the desired StatNN output.

    Since $s$ is a continuous permutation invariant function, it can be expressed in the form $s(w) = \rho(\sum_i \phi(w_i))$ for continuous functions $\rho: \RR^{d_\phi} \to \RR^{d_s}$ and $\phi: \RR \to \RR^{d_\phi}$~\citep{zaheer2017deep}. Let $\MLP^\edge: \RR^3 \to \RR^{2L \times d_\phi}$ be given by
    \begin{align}
        [\MLP^\edge(\edge_{(i,j)})]^{l, :} & = \phi(\edge_{(i,j)}) \cdot \ind\left[[\edge_{(i,j)}]^2 = l, [\edge_{(i,j)}]^3 = \mathtt{weight}\right]\\
        [\MLP^\edge(\edge_{(i,j)})]^{ l+L, :} & = \phi(\edge_{(i,j)}) \cdot \ind\left[[\edge_{(i,j)}]^2 = l, [\edge_{(i,j)}]^3 = \mathtt{bias}\right]
    \end{align}
    for $l=1, \ldots, L$. Here, $[a]^i$ is the $i$th entry of the vector $a$, $[A]^{i, :}$ denotes the $i$th entry of the matrix $A$, and $\ind$ is one if the condition is true and zero otherwise. Note that we write the output of $\MLP^\edge$ as a matrix to make indexing simpler, but it functionally is just a vector.

    Then note that the input to $\MLP^\globalFeat$ is $\sum_{\edge \in E} \edge$, which is a matrix $A \in \RR^{2L \times d_\phi}$ that satisfies:
    \begin{align}
        [A]^{l, :} & = \sum_i \phi(\mathrm{vec}(W^{(l)})_i)\\
        [A]^{l+L, :} & = \sum_i \phi(\mathrm{vec}(b^{(l)})_i).
    \end{align}
    Hence, we have that
    \begin{align}
        \rho([A]^{2l, :}) & = s(W^{(l)})\\
        \rho([A]^{2l+1, :}) & = s(b^{(l)}).
    \end{align}
    Thus, we can let $\MLP^\globalFeat: \RR^{2L \times d_\phi} \to \RR$ be given by
    \begin{equation}
        \MLP^\globalFeat(A) = \mlpstatnn\left(\rho([A]^{1, :}), \ldots, \rho([A]^{2L, :}) \right)
    \end{equation}
    and we are done.
\end{proof}

\subsubsection{Simulating NP-NFN}\label{appendix:express_npnfn}

\begin{proposition}
    Graph Metanetworks can express NP-NFN~\citep{zhou2023permutation} on MLP input networks.
\end{proposition}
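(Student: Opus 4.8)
The plan is to reduce to a single linear layer via layerwise universality. NP-NFN is a composition of permutation-equivariant linear maps interleaved with pointwise nonlinearities; by Lemma~\ref{lem:layerwise_universal} it therefore suffices to show that one NP-NFN linear layer is expressible by a constant number of Graph \MNN{} layers, since the intervening pointwise nonlinearities are realized directly by the edge update $\MLP^\edge$ of \eqref{eq:edge_features}. So the whole problem reduces to reproducing a single NP-NFN equivariant linear map on MLP weight space.

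First I would write out the explicit NP-NFN linear map from \citet{zhou2023permutation}. On MLP inputs the parameter graph coincides with the computation graph, with $W^{(l)}_{ij}$ sitting on the edge from neuron $j$ of layer $l-1$ to neuron $i$ of layer $l$, and $b^{(l)}_i$ on the edge from the layer-$l$ bias node to neuron $i$. The NP map sends each entry to a fixed linear combination of a finite set of aggregates of the input: the entry itself; pools over a single incident neuron index (row and column sums); pools over an entire weight matrix or bias vector; and the total pool over all parameters. I would catalogue each such term and assign it to the appropriate GNN mechanism. As in the StatNN proof, I carry integer layer-index and edge-type (weight vs. bias) features on every edge, so the MLPs can route each contribution to the correct output slot and keep the per-(layer, type) aggregates separate.

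The key correspondence is this. Terms that pool over a single incident neuron are \emph{local} and are computed by node aggregation: the node update \eqref{eq:node_features}, maintaining per-(layer, type) partial sums via indicator features inside $\MLP_1^\graphNode$, yields at node $i$ simultaneously the row sum of $W^{(l)}$, the column sum of $W^{(l+1)}$, and the bias $b^{(l)}_i$ --- precisely the cross-layer couplings NP-NFN ties to $W^{(l)}_{ij}$, since a hidden neuron is a shared endpoint of all three. The edge update \eqref{eq:edge_features} then reads these node features together with the edge's own value. Terms that pool over a whole matrix or the whole network are \emph{global}: a single message-passing hop cannot broadcast them, so I use the global feature. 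The global update \eqref{eq:global_features} aggregates $\sum_{\edge \in \edgeSet} \edge$, and with the layer/type indicators this one vector holds all per-matrix and total sums, which the subsequent edge update adds back to each entry.

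Assembling the node update, then the global update, then the edge update realizes all families of NP-NFN terms in a constant number of Graph \MNN{} layers; composing with the pointwise nonlinearities and invoking Lemma~\ref{lem:layerwise_universal} completes the argument. I expect the main obstacle to be the global terms: verifying that the single aggregate $\sum_{\edge \in \edgeSet} \edge$, decorated with layer/type features, really determines every whole-matrix and whole-network pool that NP-NFN uses, and that broadcasting it back through $\MLP^\edge$ with the correct linear coefficients matches the NFN map exactly rather than merely approximating its support. A secondary subtlety is the ordering of updates within a layer, so that the edge update sees both the freshly computed node aggregates and the global feature; if the framework's fixed update order forbids this in one layer, a small constant number of stacked Graph \MNN{} layers resolves it.
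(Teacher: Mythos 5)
Your construction is essentially the paper's own proof: it likewise routes per-(layer, type) aggregates using integer layer-index and weight/bias edge features, computes at each node the row sums $W^{(l)}_{k,\star}$, column sums $W^{(l+1)}_{\star,k}$, and bias values $b^{(l)}_k$ via indicator functions inside the per-edge message MLP, holds the matrix-level pools $W^{(s)}_{\star,\star}$ and $b^{(s)}_\star$ in the global feature, and lets a final edge update recombine all of these with the NP-NFN coefficients exactly (not approximately); composition and the passage from arbitrary continuous functions to actual MLPs are handled by Lemma~\ref{lem:layerwise_universal}, just as in the paper's preamble on representation versus approximation. The one point you flag as ``the main obstacle'' is real, and the paper fixes it precisely where you suspect trouble: with the global update taken literally as in \eqref{eq:global_features}, the aggregate $\sum_{\edge \in \edgeSet} \edge$ of \emph{raw} edge features (weight value, layer index, type flag) does \emph{not} determine the per-matrix pools, since summing weight values and layer indices in separate coordinates destroys the association between them. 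The paper therefore modifies the layer so that $\MLP^\globalFeat$ sits \emph{inside} the sum over edges,
\begin{equation}
\globalFeat \gets \sum_{(i,j) \in \edgeSet} \MLP^\globalFeat\left(\edge_{\edgeTuple{i}{j}}, \globalFeat\right),
\end{equation}
so each edge is routed into its (layer, type) slot before aggregation, and it reorders the updates (global, then node, then edge) within a single layer so that the edge update sees fresh node and global features --- the same ordering concern you raise. Your fallback of stacking a constant number of layers also works: an initial edge update can rewrite each edge feature as a $2L$-dimensional vector with the weight placed in its (layer, type) slot, after which the plain sum in \eqref{eq:global_features} does yield the per-matrix pools. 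But some such fix is required --- the construction is not available verbatim from the update equations of Section~\ref{sec:gnn} --- so you should state one of these two modifications explicitly rather than leave it as a verification to be done.
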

\begin{proof}
\textbf{GNN Form.} Here is the form of our GNN, where for simplicity of the proof, we move the global graph feature update before the other updates and remove one of the node-update MLPs. Note that the $\mlp^\globalFeat$ is inside the sum --- this is important for our construction.
\begin{align}
    \globalFeat & \gets \sum_{(i,j) \in E} \mlp^\globalFeat\left( \edge_{\edgeTuple{i}{j}}, \  \globalFeat\right)\\
    \graphNode_i & \gets \sum_{j: (i,j) \in E} \mlp^\graphNode(\graphNode_i, \graphNode_j, \edge_{\edgeTuple{i}{j}}, \globalFeat) \\
    \edge_{\edgeTuple{i}{j}} & \gets \mlp^\edge(\graphNode_i, \graphNode_j, \edge_{\edgeTuple{i}{j}}, \globalFeat).
\end{align}

\textbf{Graph Form.} We make the graph undirected by including the reverse edge for each edge in the DAG and labelling these as backward edges.
We assume the nodes and edges are endowed with certain features at the start (the global graph feature $u$ is initialized to $0$). Each node has the features: layer number and node type (input neuron number, output neuron number, hidden neuron, or bias neuron number). Each edge has the features: weight value, layer number, weight type, whether it is backward or not, and whether it is a weight or bias edge. Notably, each of these features is invariant to neural graph automorphisms. The node and edge features belong to the following sets:
\begin{align}
    \graphNode_i & \in \begin{bmatrix}
    \{0, \ldots, L\}\\
    \{ \mathtt{in}_1, \ldots, \mathtt{in}_{\din}, \mathtt{out}_1, \ldots, \mathtt{out}_{\dout}, \mathtt{bias}_1, \ldots, \mathtt{bias}_L, \mathtt{hidden} \} 
    \end{bmatrix}\\
    \edge_{(i,j)} & \in \begin{bmatrix}
        \RR \\
        \{1, \ldots, L\}\\
        \{\mathtt{forward}, \mathtt{backward} \}\\
        \{\mathtt{weight}, \mathtt{bias} \}\\
    \end{bmatrix}
\end{align}

    \textbf{Without biases.}
    For illustration, we will first prove this result for the case when we are only processing weights and no biases.
    First, we recall the form of the NP-NFN linear layer~\citep{zhou2023permutation}. It takes MLP weights $W \in \mathcal W = \RR^{d_L \times d_{L-1}} \times \ldots \times \RR^{d_1 \times d_0}$, and outputs new weight representations $H(W) \in \mathcal W$. We let $W^{(l)}$ denote the weight matrix of the $l$th layer, and let $W^{(l)}_{i,j}$ be the $(i,j)$th entry of this matrix. Then, the NP-NFN linear layer takes the form:
    \begin{align}
         H(W)^{(l)}_{i,j} & = \left(\sum_{s=1}^L a_1^{l, s} W^{(s)}_{\star, \star} \right) + a_2^{l, l} W^{(l)}_{\star, j} + a_3^{l, l-1} W^{(l-1)}_{j, \star}\\
        & \qquad + a_4^{l, l} W^{(l)}_{i, \star} + a_5^{l, l+1} W^{(l+1)}_{\star, i} + a_6^l W^{(l)}_{i,j},
    \end{align}
    where we define $W^{(0)} = 0$ and $W^{(L+1)} = 0$ for the boundary cases of $l=1$ and $l=L$. Here, the $a \in \RR$ are learnable scalars, and the $\star$ denotes a summation over all relevant indices. For instance, for $W^{(1)} \in \RR^{n_1 \times n_0}$, we have $W^{(1)}_{\star, k} = \sum_{i=1}^{n_1} W^{(1)}_{i, k}$ and $W^{(1)}_{\star, \star} = \sum_{i=1}^{n_1} \sum_{j=1}^{n_0} W^{(1)}_{i,j}$.

    First, we use the global graph feature to compute the first summands. Let $\mlp^\globalFeat$ have an output space of $\RR^L$, and let
    \begin{align}
      [\mlp^\globalFeat(\edge, \globalFeat)]^l = [\edge]^1 \cdot \ind[[\edge]^2 = l],
    \end{align}
     where $[y]^i \in \RR$ is the $i$th entry of the vector $y$, and $\ind$ is one if the condition inside is true, and zero otherwise. Since the first column of $\edge$ holds the weight and the second column holds the layer number, the global graph feature is updated as
    \begin{equation}
        \globalFeat \gets \begin{bmatrix} W^{(1)}_{\star, \star} &  \ldots &  W^{(L)}_{\star, \star}\end{bmatrix}^\top \in \RR^L.
    \end{equation}
    Next, we consider the node feature updates. We define $\mlp^\graphNode$ to have an output space of $\RR^2$, such that
    \begin{align}
        [\mlp^\graphNode(\graphNode_i, \graphNode_j, \edge_{\edgeTuple{i}{j}}, \globalFeat)]^1 & = [\edge_{\edgeTuple{i}{j}}]^1 \cdot \ind[[\edge_{\edgeTuple{i}{j}}]^3 = \mathtt{backward} ]\\
        [\mlp^\graphNode(\graphNode_i, \graphNode_j, \edge_{\edgeTuple{i}{j}}, \globalFeat)]^2 & = [\edge_{\edgeTuple{i}{j}}]^1 \cdot \ind[[\edge_{\edgeTuple{i}{j}}]^3 = \mathtt{forward} ].
    \end{align}
    For a node $\graphNode$ that is the $k$th node in layer $l$ (where the input nodes are at layer 0), the node update is then given by:
    \begin{align}
        \graphNode \gets \begin{bmatrix}
        W_{\star, k}^{(l+1)} & 
        W_{k, \star}^{(l)}
        \end{bmatrix} \in \RR^2.
    \end{align}
    Finally, we define the edge update. We let $\MLP^\edge$ have an output space of $\RR$, such that for any forward edge $(i,j)$ the update is (writing $l = [\edge_{\edgeTuple{i}{j}}]^2$ for brevity):
    \begin{align}
        & \MLP^\edge(\graphNode_i, \graphNode_j, \edge_{\edgeTuple{i}{j}}, \globalFeat)  =  \\
        & \qquad \sum_{s=1}^L a_1^{l, s} [\globalFeat]^s + a_2^{l, l} [\graphNode_j]^1 + a_3^{l, l-1} [\graphNode_j]^2 + a_4^{l, l} [\graphNode_i]^2 + a_5^{l, l+1} [\graphNode_i]^1 + a_6^l [\edge_{\edgeTuple{i}{j}}]^1.
    \end{align}
    By plugging in the above forms of $\globalFeat$, $\graphNode_i$, and $\graphNode_j$ (and noting that $[\edge_{\edgeTuple{i}{j}}]^1 = W^{(l)}_{i,j}$), we see that this new edge representation is exactly the NP-NFN update, so we are done.

    \textbf{With biases.}
    Recall the form of the NP-NFN linear layer where the input MLP also has biases:
    \begin{align}
        H(W, b) & = (\tilde W, \tilde b)\\
        \tilde W^{(l)}_{i,j} & =  \left(\sum_{s=1}^L a_1^{l, s} W^{(s)}_{\star, \star} \right) + a_2^{l, l} W^{(l)}_{\star, j} + a_3^{l, l-1} W^{(l-1)}_{j, \star}\\
        & \qquad + a_4^{l, l} W^{(l)}_{i, \star} + a_5^{l, l+1} W^{(l+1)}_{\star, i} + a_6^l W^{(l)}_{i,j}\\
        & \qquad + \left(\sum_{s=1}^L a_7^{l, s} b_\star^{(s)} \right) + a_8^l b_i^{(l)} + a_9^l b^{(l-1)}_j\\
        \tilde b^{(l)}_j & = \left(\sum_{s=1}^L c_1^{l, s} W^{(s)}_{\star, \star} \right) +  c_2^{l, l} W_{j, \star}^{(l)} + c_3^{l, l+1} W^{(l+1)}_{\star, j}\\
        & \qquad + \left(\sum_{s=1}^L c_4^{l, s} b^{(s)}_\star \right) + c_5^l b_j^{(l)}
    \end{align}
    Note that we define $b^{(0)} = 0$, and once again define $W^{(0)} = 0$ and $W^{(L+1)}=0$ for the boundary cases.

    We define the global model $\mlp^\globalFeat$ to have output in $\RR^{2L}$, such that for $l \in \{1, \ldots, L\}$,
    \begin{align}
      [\mlp^\globalFeat(\graphNode, \edge, \globalFeat)]^l & = [\edge]^1 \cdot \ind\left[[\edge]^2 = l, [\edge]^4=\mathtt{weight}\right]\\
      [\mlp^\globalFeat(\edge, \globalFeat)]^{l+L} & = [\edge]^1 \cdot \ind\left[[\edge]^2 = l, [\edge]^4=\mathtt{bias}\right],
    \end{align}

    The global graph feature then takes the form
     \begin{equation}
        \globalFeat \gets \begin{bmatrix} W^{(1)}_{\star, \star} &  \ldots &  W^{(L)}_{\star, \star} & b^{(1)}_\star & \ldots & b^{(L)}_\star \end{bmatrix}^\top \in \RR^{2L}.
    \end{equation}

    Next, we consider the node feature updates. We define $\mlp^\graphNode$ to have an output space of $\RR^3$, such that
    \begin{align}
        [\mlp^\graphNode(\graphNode_i, \graphNode_j, \edge_{\edgeTuple{i}{j}}, \globalFeat)]^1 & = [\edge_{\edgeTuple{i}{j}}]^1 \cdot \ind\left[[\edge_{\edgeTuple{i}{j}}]^3 = \mathtt{backward}, [\edge_{\edgeTuple{i}{j}}]^4 = \mathtt{weight} \right]\\
        [\mlp^\graphNode(\graphNode_i, \graphNode_j, \edge_{\edgeTuple{i}{j}}, \globalFeat)]^2 & = [\edge_{\edgeTuple{i}{j}}]^1 \cdot \ind\left[[\edge_{\edgeTuple{i}{j}}]^3 = \mathtt{forward}, [\edge_{\edgeTuple{i}{j}}]^4 = \mathtt{weight} \right]\\
        [\mlp^\graphNode(\graphNode_i, \graphNode_j, \edge_{\edgeTuple{i}{j}}, \globalFeat)]^3 & = [\edge_{\edgeTuple{i}{j}}]^1 \cdot \ind\left[[\edge_{\edgeTuple{i}{j}}]^3 = \mathtt{forward}, [\edge_{\edgeTuple{i}{j}}]^4 = \mathtt{bias} \right]
    \end{align}
    For a non-bias node $\graphNode$ that is the $k$th node in layer $l$, the node update is then given by:
    \begin{align}
        \graphNode \gets \begin{bmatrix}
        W_{\star, k}^{(l+1)} & 
        W_{k, \star}^{(l)} & b^{(l)}_k
        \end{bmatrix} \in \RR^3.
    \end{align}

    Finally, we define the edge update. We let $\mathrm{MLP}^\edge$ have an output space of $\RR$, such that for any forward edge $(i,j)$ the update is (writing $l = [\edge_{\edgeTuple{i}{j}}]^2$ for brevity):
    \begin{align}
         \mathrm{MLP}^\edge(\graphNode_i, \graphNode_j, \edge_{\edgeTuple{i}{j}}, \globalFeat)  & =  \\
         \ind\left [[\edge_{\edgeTuple{i}{j}}^4] = \mathtt{weight} \right ] & \Bigg( \sum_{s=1}^L a_1^{l, s} [\globalFeat]^s + a_2^{l, l} [\graphNode_j]^1 + a_3^{l, l-1} [\graphNode_j]^2 \\ 
  & \qquad + a_4^{l, l} [\graphNode_i]^2 + a_5^{l, l+1} [\graphNode_i]^1 + a_6^l [\edge_{\edgeTuple{i}{j}}]^1 \\
  & \qquad + \sum_{s=1}^L a_7^{l, s} [\globalFeat]^{s+L} + a_8^l [\graphNode_i]^3 + a_9^l [\graphNode_j]^3 \Bigg)\\
   + \ind \left[[\edge_{\edgeTuple{i}{j}}]^4 = \mathtt{bias} \right] & \Bigg(\sum_{s=1}^L c_1^{l, s} [\globalFeat]^s + c_2^{l, l} [\graphNode_j]^{2} + c_3^{l, l+1} [\graphNode_j]^{1}\\
  & \qquad +  \sum_{s=1}^L c_4^{l, s} [\globalFeat]^{s+L} + c_5^{l} [\edge_{\edgeTuple{i}{j}}]^1
  \Bigg )
    \end{align}
    By plugging in the above forms of $\globalFeat$, $\graphNode_i$, and $\graphNode_j$, we see that this new edge representation is exactly the NP-NFN update, so we are done.
\end{proof}

\subsubsection{Expressive Power with Bias Nodes}\label{appendix:expressive_bias_nodes}

Here, we explain how encoding biases in linear layers as bias nodes can aid expressive power (in the simple case of MLPs). Intuitively, this is because the bias node allows for different bias parameters of the same layer to communicate with each other in each layer of a message passing graph \mNN{}, whereas this is not as simple and requires at least two layers if biases are encoded as self-loops or node features. %

For a specific example, when the graph \mNN{} does not use a global feature (as in our experiments), then the bias node can allow message passing between biases within the same layer. In the NP-NFN case covered in Appendix~\ref{appendix:express_npnfn}, we see that the global feature allows message passing between biases of any layer. When there is no global feature, letting $\graphNode_{b^{(l)}}$ denote the bias node of the $l$th layer, then our graph metanet can nonetheless update
\begin{align}
    \graphNode_{b^{(l)}} & \gets \sum_{i=1}^{d_l} b_i^{(l)} = b_\star^{(l)}\\
    \edge_{(b^{(l)}, i)} & \gets c \graphNode_{b^{(l)}} =  c b_\star^{(l)}
\end{align}
for some scalar $c \in \RR$. This is one of the equivariant linear maps included in NP-NFN. On the other hand, a graph \mNN{} with no global feature cannot compute this in one layer if the biases are encoded as self-loops or node features --- this would require at least two layers.

\subsection{Proof: Simulating Forward Passes}

\begin{proposition}
   On computation graph inputs, graph \mNNs{} can express the forward pass of any input feedforward neural network as defined in Section~\ref{sec:theory_equivariance}.
\end{proposition}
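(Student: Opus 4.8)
The plan is to simulate the forward-pass recursion of Appendix~\ref{appendix:computation_graph} directly with message passing, using one GNN layer per depth level of the computation graph so that after $\ell$ rounds every node at depth $\le \ell$ holds its correct activation. First I would fix the network input $\nnInput \in \RR^{\din}$ and encode it into the graph alongside the weights: each edge feature $\edge_{\edgeTuple{i}{j}}$ carries its parameter value $\nnParam_{\edgeTuple{i}{j}}$, and each node feature $\graphNode_i$ is augmented with the node's type (input / bias / hidden / output) and its depth $d(i)$, defined as the length of the longest directed path from an input or bias node to $i$. These are invariant under neural DAG automorphisms, so they are admissible node and edge features. The input-node features additionally store $\nnInput_i$, and I initialize a designated ``activation slot'' of every node feature to $\nnInput_i$ for input nodes, to $1$ for bias nodes, and to $0$ otherwise.

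Next I describe the repeated GNN layer. In the node update (Eq.~\ref{eq:node_features}) I set the inner message $\MLP_1^\graphNode(\graphNode_i, \graphNode_j, \edge_{\edgeTuple{i}{j}}, \globalFeat)$ to output the product $\nnParam_{\edgeTuple{i}{j}} \cdot \nnAct_j$, where $\nnParam_{\edgeTuple{i}{j}}$ is read from the edge feature and $\nnAct_j$ denotes the value in node $j$'s activation slot. Summing over incoming edges yields the pre-activation $\sum_{\edgeTuple{i}{j}\in\edgeSet} \nnParam_{\edgeTuple{i}{j}} \nnAct_j$ at node $i$. The outer MLP $\MLP_2^\graphNode$ receives both $\graphNode_i$ (hence the old activation, the type, and $d(i)$) and this pre-activation, and I hardcode the round index $\ell$ into the $\ell$-th layer's MLP so that it gates the update: when $d(i) = \ell$ and $i$ is hidden it outputs $\sigma(\text{pre-activation})$, when $d(i)=\ell$ and $i$ is an output node it outputs the pre-activation unchanged, and when $d(i)\neq\ell$ (or $i$ is input/bias) it copies the existing activation slot. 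The edge and global updates are taken to be identity or unused. Stacking $D$ such layers, where $D$ is the maximum depth of the graph, completes the construction, and the output is read off from the activation slots of the output nodes, exactly as in the definition of $f_\nnParam$.

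Correctness follows by induction on $\ell$: the base case ($\ell = 0$) holds because input and bias activations are set at initialization, and the inductive step uses that every predecessor $j$ of a depth-$\ell$ node $i$ satisfies $d(j) < \ell$ (any longest path to $j$ extends through the edge $j\to i$), so by the inductive hypothesis $\nnAct_j$ is already correct and, by the gating, is not overwritten in round $\ell$; hence node $i$ receives exactly the pre-activation prescribed by the forward-pass recursion. This handles skip and residual edges automatically, since depth is defined by longest path and therefore strictly decreases along every incoming edge.

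The above gives an exact representation under the convention of the ``Representation vs. Approximation'' discussion that each MLP may be an arbitrary continuous function. The one place this matters is the message $\MLP_1^\graphNode$, which must compute the bilinear product $\nnParam_{\edgeTuple{i}{j}}\nnAct_j$, and the gated outer MLP, which multiplies a continuous quantity by indicators of the discrete type and depth features; both are continuous on the relevant compact domain, so Lemma~\ref{lem:layerwise_universal} upgrades the exact construction to a genuine universal-approximation statement for the composed network. I expect the main obstacle to be stating the gating cleanly --- guaranteeing that already-finalized activations persist across later rounds while not-yet-computed nodes stay untouched --- together with verifying that depth (rather than a naive per-layer index) is the correct quantity so that skip connections are handled; the product representation is then routine given the approximation framework already in place.
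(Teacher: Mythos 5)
Your proof is correct and takes essentially the same route as the paper's: induction on layer number (longest path distance from the inputs), one GNN layer per depth level, with the message MLP computing the product $\nnParam_{\edgeTuple{i}{j}} \cdot \nnAct_j$ and the outer node-update MLP applying $\sigma$ for hidden nodes and the identity for output nodes. The only difference is cosmetic: you explicitly gate updates by depth so that finalized activations are copied forward, whereas the paper simply lets every node recompute its activation at every round (which stays correct because each node's predecessors are already correct, with input and bias nodes preserved via type-dependent updates) --- your gating is a slightly more careful handling of the same invariant, not a different argument.
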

\begin{proof}
    We will prove this statement by showing that a graph \mNN{} can compute the activations $\nnAct_\graphNode^\nnParam(\nnInput)$ for an input network $f_\nnParam$ and input $\nnInput$. In particular, we will show that $l$ layers of a graph \mNN{} can compute the activation $\nnAct_\graphNode^\nnParam(\nnInput)$ for any node $\graphNode$ with layer number at most $l$, where the layer number is defined as the maximum path distance from any input node to $\graphNode$. We do this by induction on $l$. For the base case, $\nnAct_\graphNode^\nnParam(\nnInput) = \nnInput_\graphNode$ for any input node $\graphNode$, so at depth zero of the graph \mNN{} the induction hypothesis is satisfied. Also, the activation for a bias node is always 1, which a graph \mNN{} can compute using a node-update $\MLP$ that outputs 1 for any node with the bias-node label.

    For a hidden node $i$ with layer number $l > 0$, recall that we have (output nodes are handled similarly):
    \begin{equation}
        \nnAct_i^\nnParam(\nnInput) = \sigma\left( \sum_{\edgeTuple{i}{j} \in \edgeSet} \nnParam_{\edgeTuple{i}{j}} \nnAct_j^\nnParam(\nnInput) \right).
    \end{equation}
    Let $\graphNode_i^{(l-1)}$ denote the node feature computed for node $i$ at depth $l-1$ of the graph \mNN. By induction, we have $\graphNode_j^{(l-1)} = \nnAct_j^\nnParam(\nnInput)$ for any node $j$ in layer $l-1$ or earlier.
    Recall the form of the node update in a graph \mNN{} layer:
    \begin{align}
    \graphNode_i & \gets \MLP_2^\graphNode\left(\graphNode_i, \sum_{\edgeTuple{i}{j} \in \edgeSet} \MLP_1^\graphNode(\graphNode_i, \graphNode_j, \edge_{\edgeTuple{i}{j}}, \globalFeat), \globalFeat \right)
    \end{align}
    Let $\MLP_1^\graphNode$ for depth $l$ of the GNN be given by
    \begin{equation}
        \MLP_1^\graphNode(\graphNode_i, \graphNode_j, \edge_{\edgeTuple{i}{j}}, u)  = [\edge_{\edgeTuple{i}{j}}]^1 \graphNode_j,
    \end{equation}
    and let $\MLP_2^\graphNode$ for depth $l$ be given by 
    \begin{equation}
     \MLP_2^\graphNode(\graphNode_i, a, \globalFeat)  = \sigma(a).
    \end{equation}
    Then the node update equation for the GNN is
    \begin{align}
        \graphNode_i^{(l)} & = \sigma\left(\sum_{\edgeTuple{i}{j} \in \edgeSet} \nnParam_{\edgeTuple{i}{j}} \graphNode_j^{(l-1)} \right)\\
                           & = \sigma\left(\sum_{\edgeTuple{i}{j} \in \edgeSet} \nnParam_{\edgeTuple{i}{j}} \nnAct_j^\nnParam(\nnInput) \right)\\
                           & = \nnAct_i^\nnParam(\nnInput).
    \end{align}
    The same derivation holds for an output node $i$, replacing $\sigma$ with the identity map. So, we are done by induction.
\end{proof}

\section{Additional Experimental Details}

\subsection{Predicting Accuracy}

\subsubsection{Classifier Training Details}

\begin{table}[]
    \centering
    \caption{Hyperparameters for the CIFAR-10 image classifiers that we trained for the predicting accuracy experiments in Section~\ref{sec:pred_acc}.}
    {\small
    \begin{tabular}{llr}
        \toprule
        Model & Hyperparameter & Distribution / Choice \\
        \midrule
        All & Learning rate & $10^{-\mathrm{Uniform}(1, 3)}$ \\
                & Weight decay & $10^{-\mathrm{Uniform}(2, 5)}$\\
                & Label smoothing & $\mathrm{Uniform}(0, .2)$\\
                & Optimizer & $\mathrm{Uniform}(\{\mathrm{SGD}, \mathrm{Adam}, \mathrm{RMSprop}\})$ \\
        \midrule
        CNN (2D) & Hidden dimension & $\mathrm{Uniform}(\{24, 28, 32\})$ \\
            & \# Convolution layers & $\mathrm{Uniform}(\{1, 2, 3\})$\\
            & \# Linear layers & $\mathrm{Uniform}(\{1, 2\})$\\
            & Normalization type & $\mathrm{Uniform}(\{\mathrm{BatchNorm}, \mathrm{GroupNorm}\})$\\
            & Dropout & $\mathrm{Uniform}(0, .25)$\\
            & Kernel size & 3\\
        \midrule
        CNN (1D) & Hidden dimension & $\mathrm{Uniform}(\{24, 28, 32\})$ \\
        & \# Convolution layers & $\mathrm{Uniform}(\{1, 2, 3\})$\\
        & \# Linear layers & $\mathrm{Uniform}(\{1, 2\})$\\
        & Normalization type & $\mathrm{Uniform}(\{\mathrm{BatchNorm}, \mathrm{GroupNorm}\})$\\
        & Dropout & $\mathrm{Uniform}(0, .25)$\\
        & Kernel size & 9 \\
         \midrule
        DeepSets & Hidden dimension & $\mathrm{Uniform}(\{32, 64\})$ \\
                 & \# Equivariant layers & $\mathrm{Uniform}(\{2, 3, 4\})$\\
                 & \# Linear layers & $\mathrm{Uniform}(\{1, 2\})$\\
                 & Normalization type & $\mathrm{Uniform}(\{\mathrm{BatchNorm}, \mathrm{GroupNorm}\})$\\
                & Dropout & $\mathrm{Uniform}(0, .25)$\\
        \midrule
        ResNet  & Hidden dimension & $\mathrm{Uniform}(\{16, 32\})$ \\
                & \# Blocks & $\mathrm{Uniform}(\{2, 3, 4\})$\\
                & \# Linear layers & 1\\
                & Normalization type & $\mathrm{BatchNorm}$\\
                & Dropout & 0\\
        \midrule
        ViT     & Hidden dimension & $\mathrm{Uniform}(\{32, 48\})$ \\
                & \# Transformer blocks & $\mathrm{Uniform}(\{2, 3\})$\\
                & \# Linear layers & 1\\
                & Normalization type & $\mathrm{LayerNorm}$\\
                & Dropout & $\mathrm{Uniform}(0, .25)$\\
         \bottomrule
    \end{tabular}
    }
    \label{tab:zoo_hparams}
\end{table}

For the predicting accuracy experiments, we train two datasets of about $\num{30000}$ CIFAR-10 image classification neural networks each. We use the Fast Forward Computer Vision (FFCV) library~\citep{leclerc2023ffcv} for fast training and build off their template training setup for CIFAR-10. We use random horizontal flips, random translations, and Cutout~\citep{devries2017improved} as data augmentation. The learning rate schedule has a linear warmup and then a linear decay to zero. All the other hyperparameters are given in Table~\ref{tab:zoo_hparams}; we sample many of the hyperparameters for each neural network. For the Diverse Architectures dataset, we train about $\num{6000}$ of each of the $\num{5}$ types of networks, while for the Varying CNNs dataset we train about $\num{30000}$ of the 2D-CNNs.

\subsubsection{Metanetwork Training Details}

\begin{table}[ht]
    \centering
    \rebut{
    \caption{Additional results for predicting accuracy on the dataset of small CNNs from~\citet{unterthiner2020predicting}. We see that \GMNs{} outperform all metanetworks. Here, we use the Graph Transformer GRIT~\citep{ma2023graph} as our graph learning model for \GMN{}, as we find it performs better than the message passing models that we tried.}
    \begin{tabular}{l|c}
    \toprule
        Metanetwork \qquad \quad &  Test $\tau$  \\
        \midrule
         NFN-HNP &  .934$\pm.001$\\
         NFN-NP &  .922$\pm.001$\\
         StatNN & .915$\pm.002$ \\
         NFT & .926$\pm.001$\\
         \midrule
         \GMN{} (ours) & \bf .938$\pm.000$\\
         \bottomrule
    \end{tabular}
    }
    \label{tab:pred_acc_small_cnn}
\end{table}

\paragraph{Data splits.} For the 50\% training data experiment, we take a random split of $\num{15000}$ training networks, and for the 5\% training data experiment we take a random split of $\num{1500}$ training networks. Then $\num{2000}$ random networks are selected for validation, and the rest are for testing.

For the OOD experiments, we only train and validate on networks of the lowest hidden dimension of each architecture. This means that we only train and validate on CNNs (both 2D and 1D) of hidden dimension $\num{24}$, DeepSets of hidden dimension $\num{32}$, ResNets of hidden dimension $\num{16}$, and ViTs of hidden dimension $\num{32}$. $\num{2000}$ of these shallower networks are chosen for validation, and the rest for training. Then we test on all networks with higher hidden dimension.

\paragraph{Metanetworks.} For each \mNN{}, we first choose hyperparameters such that the total number of trainable parameters is around $\num{750000}$.  Then we search learning rates $\alpha \in \{.0001, .0005, .001, .005, .01\}$, and choose the learning rate that achieves the best validation $R^2$. Finally, we train the model using this best learning rate with $5$ random seeds and report the mean and standard deviation of the test set $R^2$ and Kendall $\tau$ values. 

We train all \mNNs{} with the Adam optimizer~\citep{kingma2014adam}. The training loss is a binary cross entropy loss between the predicted and true accuracy; each \mNN{} has a sigmoid nonlinearity at the end to ensure that its input is within $[0, 1]$.

\paragraph{Small CNN Experiments}

To compare against previously proposed metanetworks, we also run experiments on predicting accuracy of the small CNNs trained by \citet{unterthiner2020predicting}, following the experimental setup of \citet{zhou2023permutation}. Results are given in Table~\ref{tab:pred_acc_small_cnn}. We see that our \GMNs{} outperform all other metanetworks. We use GRIT~\citep{ma2023graph} as our graph learning model, which is a type of Graph Transformer. Our GRIT model has \num{3652865} parameters, uses a hidden dimension of 256, has 4 layers, is trained with a learning rate of \texttt{3e-4}, and uses 16 heads in its attention modules.

\begin{figure}[!t]
    \centering
    \includegraphics[width=.9\columnwidth]{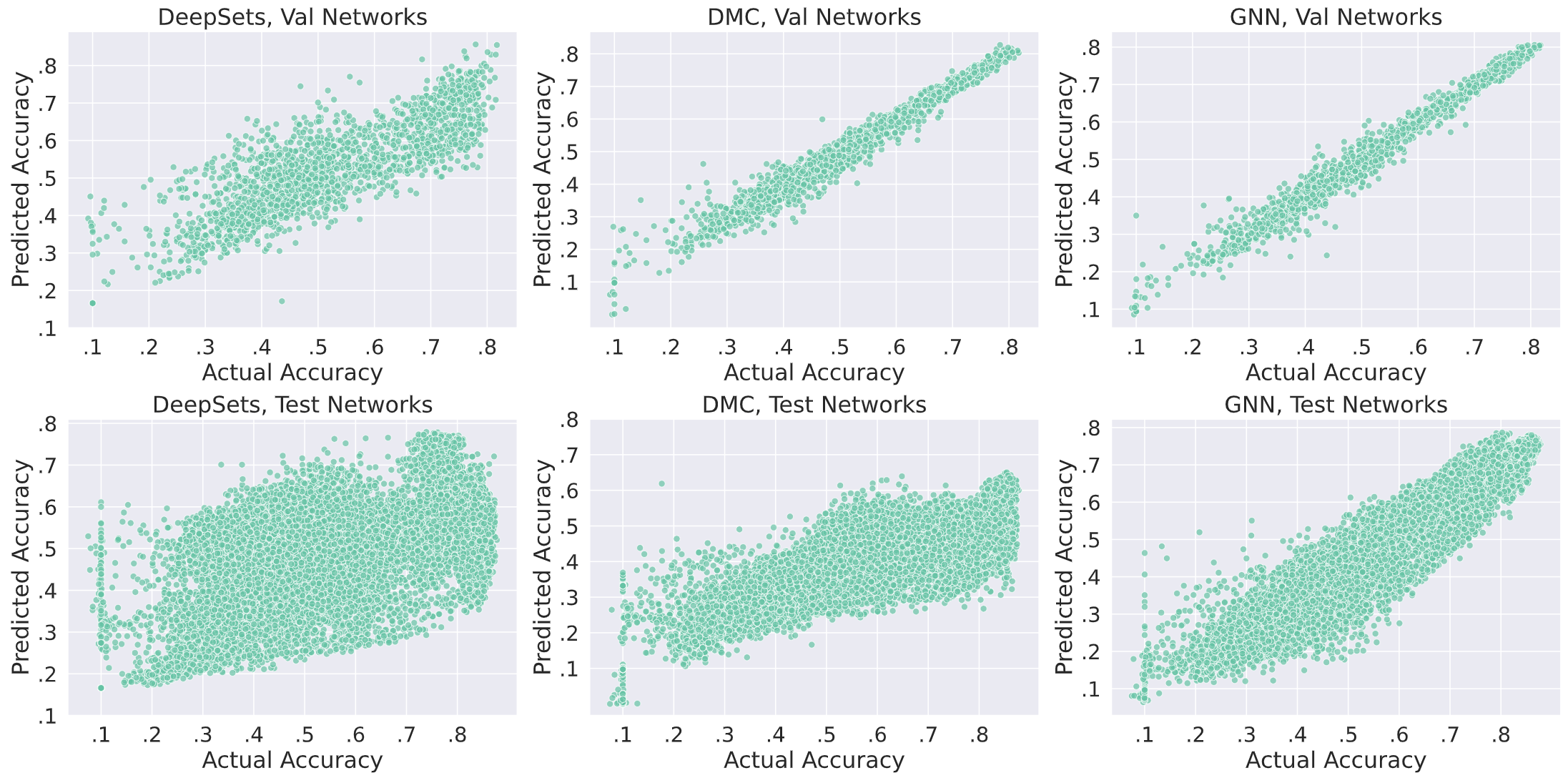}
    \caption{Scatterplots of actual test accuracies (x-axis) and metanetwork-predicted test accuracies (y-axis) for image classifiers from the Diverse Architectures dataset with the OOD train/test split. Top row shows validation split input networks, and bottom row shows test split input networks.}
    \label{fig:scatter_acc}
\end{figure}

\subsection{Editing 2D INRs Details}

We closely follow the experimental setup of \citet{zhou2023permutation} for the experiments in Section~\ref{sec:edit_2d}. As in their work, we train the \mNNs{} for \num{50000} iterations with a batch size of 32, using the Adam optimizer with .001 learning rate. We follow their setup to parameterize the updated weights $\tilde \nnParam$ as $\tilde \nnParam = \nnParam + \gamma \cdot \mathrm{Metanet}(\nnParam)$, where $\nnParam$ are the parameters of the INR representing the unedited image, and $\gamma$ are learned scalars for each parameter that are initialized to .01.
Our message passing GNN metanetworks have \num{8945671} parameters, hidden dimension of 512, and 4 message passing layers.

\subsection{Self-Supervised Learning with INRs details}

We closely follow the experimental setup of \citet{navon2023equivariant} for the self-supervised learning task. In particular, as they do in their experiments, we choose hyperparameters for our graph \mNN{} such that the number of trainable metanetwork parameters is about $\num{100000}$. Thus, we use a graph metanetwork with 3 GNN layers and a hidden dimension of 76.

\rebut{We follow the same training procedure as \citet{navon2023equivariant}, which uses a contrastive objective with data augmentations given by adding Gaussian noise to the weights and masking weights to zero.}
As in \citet{navon2023equivariant}, we evaluate our \mNN{} on three random seeds, and report the mean and standard deviation across these three runs.

\end{document}